\documentclass[letterpaper,journal]{IEEEtran}

\usepackage{amsmath,amssymb,amsfonts}
\allowdisplaybreaks
\usepackage{algorithm}
\usepackage{algpseudocode}
\usepackage{graphicx}
\usepackage{textcomp}
\usepackage[dvipsnames]{xcolor}
\usepackage{soul}
\usepackage{tabularray}
\usepackage{graphicx}
\usepackage{array}
\usepackage{subcaption}
\usepackage{adjustbox}
\usepackage[utf8]{inputenc}
\usepackage{mathtools}
\usepackage{balance}
\usepackage{hyperref}
\usepackage{authblk}
\usepackage{multirow}
\usepackage{hyperref}
\usepackage{stfloats}
\usepackage{siunitx}
\usepackage{comment}

\usepackage[opt]{icpslab}  
\renewcommand{\subjectto}{\ensuremath{\text{s.t.}}}

\definecolor{lightskyblue}{RGB}{135, 206, 250}

\newtheorem{remark}{Remark}
\newtheorem{lemma}{Lemma}
\newtheorem{definition}{Definition}
\newtheorem{theorem}{Theorem}
\newtheorem{proposition}{Proposition}

\DeclareMathOperator{\prox}{\mathbf{prox}}
\DeclareMathOperator{\Dom}{Dom}
\newcommand{\ICNN}{\mathrm{ICNN}}

\def\bfM{\mathbf{M}}

\def\bfx{\mathbf{x}}
\def\bfu{\mathbf{u}}

\author{Binh Nguyen, Trinh Tran, Truong X. Nghiem$^\star$
\thanks{This material is based upon work supported by the National Science Foundation under Grants No.~2449927 and No.~2514584.}
\thanks{The authors are with the Department of Electrical and Computer Engineering, College of Engineering and Computer Science, University of Central Florida, Orlando, FL 32816, USA}
\thanks{$^\star$Corresponding author: Truong X. Nghiem (truong.nghiem@ucf.edu)}
}

\title{LEAF: A Learning-Enabled ADMM Framework for Accelerated Convex Optimization}

\begin{document}

\maketitle

\begin{abstract}
We propose LEAF, a learning-enabled ADMM framework for accelerated convex optimization.
The key idea is to approximate the Moreau envelope of the objective function using an Input Convex Neural Network (ICNN), resulting in a learned model that preserves convexity and smoothness.
This leads to the proposed Moreau Envelope Learning ADMM (MEL-ADMM) and its splitting variant sMEL-ADMM.
Unlike existing approaches that learn high-dimensional operators directly, LEAF learns a scalar-valued Moreau envelope, significantly reducing model complexity and improving data efficiency.
The framework accommodates a broad class of convex problems with smooth and non-smooth objectives.
By embedding convexity explicitly through the ICNN architecture, the proposed approach maintains high approximation accuracy while preserving key structural properties of the optimization problem.
Both MEL-ADMM and sMEL-ADMM are developed with theoretical guarantees of convergence and feasibility under the learned model.
Rigorous analysis shows that the proposed methods achieve convergence rates comparable to classical ADMM while reducing per-iteration computational cost.
Numerical experiments demonstrate up to an order-of-magnitude speedup over state-of-the-art solvers while maintaining low optimality gaps.    
\end{abstract}

\begin{IEEEkeywords}
Learning-to-optimize; Input convex neural networks; Moreau envelope; ADMM; Convex optimization
\end{IEEEkeywords}

\section{Introduction}
\label{sec:intro}

Learning-to-optimize (L2O) is an emerging paradigm that integrates machine learning with numerical optimization to improve the efficiency of solving complex problems.
\cite{chen2022learning,sacks2022learning}
Instead of relying solely on iterative algorithms, L2O leverages data to learn mappings from problem instances to solutions or to create fast numerical iterations based on existing solvers. 
In particular, for parametric optimization problems where the objective and constraints depend on varying parameters, L2O methods can exploit shared structures across instances to provide near-optimal solutions with reduced computational cost. 
By learning from previously solved examples, these approaches enable scalable and adaptive optimization frameworks that are especially well-suited for real-time and large-scale applications.
The main challenges of L2O methods lie in the accuracy of the obtained solutions, lack of constraint satisfaction guarantees, 
and their dependence on the training data distribution for convergence.
As a result, their performance often degrade on out-of-distribution problem instances, leading to unreliable or inaccurate solutions.
Moreover, the lack of theoretical guarantees on feasibility and convergence limits their robustness in safety-critical applications.
Integrating more problem and algorithmic structures into L2O methods could potentially overcome these challenges.


Over the past decades, the Alternating Direction Method of Multipliers (ADMM) has attracted significant research attention and has undergone substantial theoretical and algorithmic developments 
\cite{wang2017admm, nguyen2022collisionfree, Duarte2026communicationefficient}.
As a dual domain method, the ADMM is superior to its primal domain counterparts, such as the gradient descent method, in terms of convergence speed.
At a high level, ADMM is useful when an objective consists of multiple components that depend on different variables,  while those variables must satisfy local and possibly coupling constraints. 
Due to its broad capability, ADMM has been ubiquitous in many applications, including  image processing \cite{An2024DEs-InspiredADMM},  power grid systems \cite{hasanzadeh2025admm},  communication networks \cite{Xu2024QC-ODKLA}, 
neural network training \cite{Zhang2022structADMM},
and cooperative planning of multiple robots \cite{nguyen2024connectivity}.
Despite these advantages, ADMM faces notable limitations in large-scale parametric optimization, where problem instances vary due to changing initial conditions or parameters.
In such settings, each instance is typically solved independently, without exploiting any shared structure, resulting in substantial computational overhead, especially for high-dimensional or non-smooth problems requiring many iterations.

In this paper, we propose LEAF (Learning-Enabled ADMM Framework), a \emph{structured L2O framework} that leverages machine learning techniques based on ADMM to address the above challenges of both ADMM and L2O methods.
The scalability of ADMM for parametric convex optimization is substantially improved by our L2O method, which is purposely designed to integrate the problem and ADMM structures to enhance solution accuracy, feasibility, and convergence guarantees.
In particular, this work develops a supervised learning model based on Input Convex Neural Networks (ICNNs) to approximate the Moreau envelope (ME) of the convex objective function, yielding a scalar output and guaranteeing the smoothness and convexity of the ME.
The model is then embedded in ADMM to accelerate its computation while preserving feasibility and convergence. 
LEAF supports a wide range of optimization problems, including non-smooth formulations with dynamic constraints. 
Compared with existing L2O methods, LEAF significantly reduces the size of the learning model by leveraging ICNNs, which preserve convexity and maintain high accuracy with fewer trainable parameters.
This compact architecture not only lowers computational and memory requirements but also improves data efficiency by embedding convexity priors directly into the network structure, thereby reducing model complexity and enabling accurate learning from limited training samples.
Based on LEAF, we propose two algorithms, Moreau-Envelope Learning ADMM (MEL-ADMM) and splitting MEL-ADMM (sMEL-ADMM), that ensure both global convergence and feasibility of the solution.
Rigorous theoretical analysis shows that the convergence rate of the proposed algorithms is comparable to ADMM, while achieving a lower per-iteration computational cost.
In several applications, our methods consistently outperform existing methods, achieving up to ten times faster solving time within acceptable optimality gaps, even with relatively compact learning models.

Our main contributions are summarized as follows:
\begin{itemize}
    \item \textit{An effective 
    learning-enabled ADMM framework (LEAF)}: We propose a general framework that integrates supervised learning into ADMM to bypass expensive primal updates through efficient forward computations.
    \item \textit{Convexity informed learning}:  
    We employ ICNNs to approximate Moreau envelopes while explicitly enforcing convexity and smoothness, leading to highly accurate and data-efficient learning.
    \item \textit{Theoretical guarantees:} We propose two algorithms with rigorous theoretical analysis ensuring both global convergence and feasibility of the solution.
    \item \textit{Comprehensive experimental validation}: Numerical experiments on large-scale convex optimization problems demonstrate significant computational speedups over state-of-the-art solvers.
\end{itemize}

The remainder of this paper is organized as follows.
Section \ref{sec:lit-review} reviews related work.
Section \ref{sec:problem-formulation} presents the problem formulation and preliminaries.
Section \ref{sec:framework} introduces LEAF and the associated algorithms.
Section \ref{sec:analysis} provides theoretical analysis and convergence guarantees.
Section \ref{sec:MPC} presents an MPC application of LEAF.
Section \ref{sec:experiments} reports numerical experiments.
Finally, Section \ref{sec:conclusion} concludes the paper.

\section{Literature review}
\label{sec:lit-review}

This section reviews the learning-to-optimize (L2O) literature most relevant to this work, focusing on three aspects:
(i) learning approaches, 
(ii) modeling strategies and structural priors,
and (iii) feasibility and optimality considerations.

\noindent {\bf Learning Approaches.} 
The integration of machine learning into optimization to improve computational efficiency and scalability has been extensively studied \cite{chen2022learning,bengio2021machine}.
Existing approaches can be broadly categorized into three classes.

\begin{itemize}
\item The first class consists of end-to-end learning methods that directly map problem parameters to optimal solutions.
These methods typically rely on supervised learning and require a large number of training samples generated by conventional solvers \cite{koziel2013surrogate, zamzam2020learning}.
To improve solution quality, several works incorporate optimality conditions, such as the Karush–Kuhn–Tucker (KKT) conditions, into the training process \cite{Saket2021CDC,peiris2025kkt}.
Constraint satisfaction is often addressed through penalty-based losses or feasibility-seeking mechanisms \cite{dontiDC3Learning2021, nguyen2025fsnet}.
However, these methods generally lack strict guarantees of feasibility and optimality, and their performance is sensitive to the coverage of the training data distribution. As a result, they may exhibit degraded performance when encountering out-of-distribution problem instances.

\item The second class enhances existing optimization solvers by incorporating learned components within the iterative process. 
Representative works include learning warm-start initializations \cite{baker2019learning, sambharya2024learning} and predicting active constraint sets to facilitate constraint elimination \cite{misra2022learning}. 
These methods share a common objective of reducing the number of iterations required for convergence while preserving the underlying solver structure.
Another line of research focuses on approximating computationally expensive operators within each iteration using data-driven models \cite{Chang2017OneNetwork, meinhardt2017learning}. 
While effective in accelerating individual steps, these methods typically involve learning high-dimensional mappings whose outputs match the dimension of the decision variables, which can limit scalability and weaken structural guarantees.
In contrast, our framework adopts a different strategy by learning the Moreau envelope rather than directly approximating the proximal operator. 
Our approach yields a learning model with a scalar-valued output, significantly reducing learning complexity while preserving key structural properties such as convexity.

\item
The third class 
designs end-to-end learning architectures by unrolling the iterations of classical optimization algorithms into trainable networks. 
Representative examples include ADMM-based deep learning models \cite{sun2016deep, Yang2020ADMMCSNet} and learned optimizers based on gradient descent dynamics \cite{andrychowicz2016learning, chen2017learning}.
While these methods can achieve efficient inference by learning optimization dynamics, they often lack rigorous convergence guarantees and typically require a substantial number of training samples.
\end{itemize}

End-to-end learning methods generally heavily depend on the coverage of the training dataset, which must include representative problem instances and their corresponding true optimal solutions as labels.
In high-dimensional settings, such coverage is often limited, which can reduce the accuracy and generalization performance of end-to-end learning models.

\noindent {\bf Modeling approach and structural priors.}
The aforementioned studies rely on generic neural network models 
to directly infer solutions or approximate intermediate operators.
These approaches typically produce outputs with the same dimensionality as the decision variables, leading to high-dimensional representations. 
As a result, they often suffer from substantial computational complexity, limited scalability, and data inefficiency.
To address these limitations, we propose a structured learning approach that leverages supervised learning with input-convex neural networks (ICNNs) to approximate the Moreau envelope of the objective function. 
Unlike generic models, our model learns a scalar-valued function, significantly reducing the output dimensionality. 
Moreover, by embedding convexity through ICNNs, the learned model preserves key structural properties, including convexity, smoothness, and Lipschitz continuity, providing stronger theoretical guarantees and improved approximation accuracy.
Overall, this design significantly improves computational efficiency, scalability, accuracy, and data efficiency. 

\noindent {\bf Feasibility and optimality.}
While end-to-end L2O approaches can deliver fast solutions, they often do not guarantee satisfaction of problem constraints.
Several constraint-enforcement approaches have been proposed to mitigate this issue, including penalty
methods \cite{dontiDC3Learning2021, park2023self} and projection-based methods \cite{min2024hardnet}.
The former adds penalty terms to the loss function to penalize constraint violations during training; however, it lacks feasibility guarantees.
The latter projects the predicted solutions onto the feasible set to provide feasible solutions at the expense of larger optimality gaps. 
Our methods guarantee solution feasibility while achieving optimality gaps below $0.1\%$ in practical examples.

\section{Problem Formulation} 
\label{sec:problem-formulation}

Before detailing the problem formulation, we introduce the notations and important definitions used in this paper.
Let $\bbN_{>0}$ denote the set of positive integers.
For a function $f$, $\Dom(f)$ denotes its domain.
Given an index set $\calI = \{i_1, i_2,\dots, i_n \} \subset \bbN_{>0}$ and real matrices $M_{i_1}, M_{i_2}, \dots, M_{i_n}$ in appropriate dimensions, let us denote their concatenation
\begin{align*}
 [M_{i}]_{i \in \calI} = \begin{bmatrix}  M_{i_1}^\top &M_{i_2}^\top &\dots &M_{i_n}^\top \end{bmatrix}^\top.
\end{align*}
A function $f: X \rightarrow \bbR \cup \{\pm\infty\}$
is proper if $\Dom(f) \neq \emptyset$ and $f(x) > -\infty$ for all $x \in X$, is lower semi-continuous (l.s.c) at $x$ if $x_k \to x$ implies
$ \lim_{k \to \infty} f(x_k) \geq f(x)$, and
is l.s.c. on $X$ if $f$ is l.s.c. at every $x \in X$.
A function $f$ is convex if
$\forall x,y \in \Dom(f)$ and $\alpha \in [0, 1]$,
$\alpha x + (1-\alpha)y \in \Dom(f)$ and 
$f(\alpha x + (1-\alpha)y) \leq \alpha f(x) + (1-\alpha) f(y)$.
Let $\Gamma_0(X)$ denote the set of all proper, lower semi-continuous, and convex functions on $X$ \cite{rockafellar2015convex}.
The indicator function of a convex set $\calC$ is defined as $I_\calC(x) = +\infty$ if $x \not\in \calC$ and $I_\calC(x) = 0$ if $x\in \calC$.
A function $f$ is $L$-Lipschitz continuous with Lipschitz constant $L$ if 
$\Vert f(x) - f(y) \Vert_2 \leq L \Vert x - y \Vert_2$ for all $x,y \in \Dom(f)$.

\subsection{Optimization problem}
\label{sec:general_opti}

This paper considers the following optimization problem
\begin{subequations}
\label{eq:general_opti}
\begin{optimization}{f(z)}
    &A z = b, \label{eq:gen_opti_eq_cstr} \\
    &z \in \calZ, \label{eq:gen_opti_ieq_str}
\end{optimization}
\end{subequations}
where $ z \in \bbR^n$ is the decision vector.
We assume that the objective function $f(z): \bbR^n \rightarrow \bbR \cup \{\pm\infty\}$ is convex, proper, and lower semi-continuous, that is, $f\in \Gamma_0(\bbR^n)$.
The matrix $A \in \bbR^{m\times n}$, vector $b \in \bbR^m$, and non-empty convex set $\calZ$ are problem parameters and may vary across problem instances.
We assume that the parameters $(A, b, \calZ)$ are such that the optimization problem \eqref{eq:general_opti} is feasible.

By introducing auxiliary variable $w \in \bbR^n$, the problem \eqref{eq:general_opti} is equivalent to 
\begin{subequations}
\label{eq:general_opti_con}
\begin{optimization}{f(z) + I_{\bar\calZ}(w)}
    &w - z = 0,
\end{optimization}
\end{subequations}
where $\bar{\calZ} = \{z \in \calZ\,|\,A z = b \}$.
The augmented Lagrangian of \eqref{eq:general_opti_con} with the penalty parameter $\rho > 0$ is given by
\begin{equation}
\calL(z, w, \alpha) = f(z) \!+\! I_{\bar{\calZ}} (w) \!+\! \alpha^\top(w \!-\! z) 
\!+\! \frac{\rho}{2} \Vert w \!-\! z \Vert^2_2\text.
\label{eq:lagrangian}
\end{equation}
Applying the over-relaxed ADMM iterations \cite{boydDistributedOptimization2011} to 
the augmented Lagrangian~\eqref{eq:lagrangian} yields
\begin{subequations}
\label{eq:ADMM-gen}
\begin{align}
    z^{i+1} &= \argmin f(z)  + \frac{\rho}{2} \Vert w^i - z +  \rho^{-1} \alpha^i\Vert_2^2,
    \label{eq:ADMM-gen-z}
    \\
    \tilde z^{i+1} &=  \gamma z^{i+1} + (1-\gamma)w^i,
    \\
    w^{i+1} &= \Pi_{\bar{\calZ}}( \tilde z^{i+1} -  \rho^{-1} \alpha^i),
    \label{eq:ADMM-gen-w}
    \\
    \alpha^{i+1} &= \alpha^i + \rho (w^{i+1} - \tilde z^{i+1}),
    \label{eq:ADMM-gen-b}
\end{align}
\end{subequations}
where
$\gamma \in (0,2)$ is the relaxation parameter
and $\Pi_{\bar{\calZ}}$ denotes the Euclidean projection onto $\bar\calZ$.
We tune 
$\gamma$ and $\rho$, specifically we choose $\gamma \in [1.0,1.8]$, which has been empirically shown to
improve the convergence rate \cite{nishihara2015general, eckstein1998operator}.

The computational bottleneck of the ADMM iteration \eqref{eq:ADMM-gen} is primarily associated with the subproblems \eqref{eq:ADMM-gen-z} and \eqref{eq:ADMM-gen-w}. 
This burden is 
exacerbated when $f$ is non-quadratic or non-smooth, in which case solving \eqref{eq:ADMM-gen-z} becomes particularly expensive.

Addressing the above challenge, our \textbf{objective} is to efficiently compute \emph{feasible and near-optimal solutions} to \eqref{eq:general_opti} by leveraging machine learning to accelerate ADMM iterations while preserving convergence and feasibility guarantees.

\subsection{Moreau envelopes}

We recall the concept of Moreau envelopes.
Let $f: \bbR^n \rightarrow \bbR \cup \{ +\infty \}$ be a closed proper convex function.
The proximal operator $\prox_{\lambda f}: \bbR^n \rightarrow \bbR$ of the scaled function $\lambda f$, for $\lambda > 0$, is given by 
\begin{align}
    \label{eq:proximal_operator}
    \prox_{\lambda f}(x) = \argmin_z \left\{ f(z) + \frac{1}{2\lambda} \Vert x - z \Vert_2^2 \right\}.
\end{align}
The Moreau envelope $M_{\lambda}f$ of $f$ is then defined as \cite{bauschke2020correction}
\begin{equation}
\label{eq:moreau}
M_{\lambda}f(x) = \min_z \left\{ f(z) + \frac{1}{2 \lambda }\|x -z\|_2^2\right\}\text.
\end{equation}
\begin{proposition}[\cite{bauschke2020correction, bertsekas-cvx-algos}]
\label{pro:pros}
If $f \in \Gamma_0(\bbR^n)$ then 
\begin{itemize}
\item[(i)] $M_{\lambda}f(x)$ is convex and $\nabla M_{\lambda}f (x)$ is Lipschitz continuous with constant $\frac{1}{\lambda}$,
\item[(ii)] $M_{\lambda}f(x) \leq f(x)$ for all  $x \in \bbR^n$.
\end{itemize}
\end{proposition}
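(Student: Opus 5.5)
Part (ii) is immediate: taking $z=x$ in the minimization \eqref{eq:moreau} gives $M_\lambda f(x) \le f(x) + \frac{1}{2\lambda}\|x-x\|_2^2 = f(x)$. If $x\notin\Dom(f)$ the right-hand side is $+\infty$ and there is nothing to prove.

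For part (i) I will first establish that the envelope is well defined and that the proximal operator is single valued. Since $f\in\Gamma_0(\bbR^n)$, $f$ admits an affine minorant. Hence $z\mapsto f(z)+\frac{1}{2\lambda}\|x-z\|_2^2$ is proper, l.s.c., strongly convex, and coercive. It therefore has a unique minimizer $p=\prox_{\lambda f}(x)$, and $M_\lambda f(x)$ is finite for every $x$.

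Convexity then follows because $(x,z)\mapsto f(z)+\frac{1}{2\lambda}\|x-z\|_2^2$ is jointly convex, and partial minimization of a jointly convex function over $z$ preserves convexity in $x$. Equivalently, $M_\lambda f$ is the infimal convolution of $f$ with the convex function $\frac{1}{2\lambda}\|\cdot\|_2^2$.

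For the gradient I will prove the identity $\nabla M_\lambda f(x)=\frac{1}{\lambda}\bigl(x-\prox_{\lambda f}(x)\bigr)$ in two steps.
\begin{itemize}
\item \textbf{Optimality condition.} The optimality condition for $p$ is $\frac{1}{\lambda}(x-p)\in\partial f(p)$.
\item \textbf{Two-sided bounds.} For any $y$, let $q=\prox_{\lambda f}(y)$. Using $p$ as a feasible candidate at $y$ and $q$ as one at $x$, together with the subgradient inequality, I get
\[
M_\lambda f(y)-M_\lambda f(x)-\tfrac{1}{\lambda}(x-p)^\top(y-x)\in\bigl[0,\tfrac{1}{2\lambda}\|y-x\|_2^2\bigr].
\]
\end{itemize}
The upper bound comes from the candidate $p$ at $y$ by expanding the square. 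The lower bound uses $f(q)\ge f(p)+\frac{1}{\lambda}(x-p)^\top(q-p)$ and completing a square. These bounds show Fréchet differentiability with the claimed gradient. In fact they directly give a convex function with quadratic upper bound $\frac{1}{2\lambda}\|y-x\|^2$, which by the standard characterization (Baillon–Haddad / descent lemma converse for convex functions) yields $\frac1\lambda$-Lipschitz gradient.

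Alternatively, I will prove Lipschitz continuity directly via firm nonexpansiveness of the proximal operator. Monotonicity of $\partial f$ applied to $\frac1\lambda(x-p)\in\partial f(p)$ and $\frac1\lambda(y-q)\in\partial f(q)$ gives
\[
\|p-q\|^2\le (p-q)^\top(x-y).
\]
Writing $u=x-y$ and $v=p-q$, this reads $\|v\|^2\le v^\top u$. Then
\[
\|u-v\|^2=\|u\|^2-2u^\top v+\|v\|^2\le\|u\|^2-\|v\|^2\le\|u\|^2 ,
\]
so $\|\nabla M_\lambda f(x)-\nabla M_\lambda f(y)\|=\frac1\lambda\|u-v\|\le\frac1\lambda\|x-y\|$.

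The main obstacle is the differentiability step, specifically the lower bound in the sandwich inequality. It requires careful use of the subgradient inequality at $p$ and an algebraic completion of squares. I would try that calculation first, falling back on the conjugate identity $M_\lambda f=(f^*+\frac\lambda2\|\cdot\|^2)^*$, where strong convexity of the inner function gives the gradient's Lipschitz constant $1/\lambda$ immediately.
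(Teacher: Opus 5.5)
Your proof is correct and complete. The paper does not prove Proposition~\ref{pro:pros}; it imports it from the cited references. Your direct argument is essentially the textbook proof found there: well-posedness of $\prox_{\lambda f}$ from coercivity and strong convexity, the sandwich bound giving $\nabla M_\lambda f(x)=\lambda^{-1}(x-\prox_{\lambda f}(x))$, and firm nonexpansiveness from monotonicity of $\partial f$. The step you flagged as the main obstacle does go through. With $a=x-p$ and $b=y-q$, and after multiplying by $2\lambda$, the lower bound reduces to $2a^\top(a-b)+\|b\|_2^2-\|a\|_2^2=\|a-b\|_2^2\ge 0$.

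The comparison worth making is with your fallback. The conjugate identity $(M_\lambda f)^\star=f^\star+\frac{\lambda}{2}\|\cdot\|_2^2$ is exactly the machinery the paper uses in Appendix~\ref{appx} for the converse Lemma~\ref{thm:convergence1}. That route would therefore present Proposition~\ref{pro:pros}(i) and Lemma~\ref{thm:convergence1} as the two directions of one duality statement.

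Your primary route buys more, though. It is elementary, it proves the gradient formula \eqref{eq:MErelationship} that the paper also only cites, and it yields firm nonexpansiveness of $\prox_{\lambda f}$ as a by-product. The conjugate route is shorter but needs two extra ingredients. First, you need $M_\lambda f\in\Gamma_0(\bbR^n)$ (it is finite and convex, hence continuous) in order to biconjugate. Second, you need the full form of Rockafellar--Wets Proposition~12.60: the version stated in Lemma~\ref{lm:dual_convex} presupposes differentiability, so on its own it would not establish that $\nabla M_\lambda f$ exists.
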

Furthermore, a useful relationship exists between $M_{\lambda}f(x)$ and $\prox_{\lambda f}(x)$, expressed by \cite[Proposition~5.1.7]{bertsekas-cvx-algos}:
\begin{equation}
\label{eq:MErelationship}
\prox_{\lambda f}(x) = x - \lambda  \nabla M_{\lambda}f(x)\text.
\end{equation} 
With the help of the Moreau envelope, substituting \eqref{eq:MErelationship} into \eqref{eq:ADMM-gen-z} with $\lambda = \frac{1}{\rho}$, 
the $z-$update \eqref{eq:ADMM-gen-z} turns into
\begin{align} \label{eq:ADMM-PCA-z-ME}
    z^{i+1} = q^i - \frac{1}{\rho}\nabla M_{\frac{1}{\rho}}f (q^i),
\end{align}
where $q^i =  w^i + \rho^{-1} \alpha^i$. 

\section{Learning-Enabled ADMM Framework (LEAF)}
\label{sec:framework}

To address the computational bottleneck in ADMM iterations, particularly the expensive proximal update in \eqref{eq:ADMM-gen-z}, we propose a \emph{learning-enabled ADMM framework} (LEAF) that replaces this step with a learned approximation.
The key idea is to approximate the Moreau envelope (ME) of the objective function using a differentiable learning model, enabling the proximal update to be computed efficiently via gradient evaluation. 
Rather than learning to directly approximate the proximal operator by a neural network as in \cite{meinhardt2017learning}, our approach learns a scalar-valued representation of the ME $M_{\frac{1}{\rho}} f$, from which the required update can be obtained through its gradient by \eqref{eq:ADMM-PCA-z-ME}.
This formulation significantly reduces the learning complexity while preserving important structural properties. To ensure convexity and theoretical consistency, we employ input convex neural networks (ICNNs) to model the ME. 
Building on this idea, we develop the Moreau Envelope Learning (MEL) approach and integrate it into ADMM to obtain two algorithms, termed MEL-ADMM and its splitting variant sMEL-ADMM, to be presented in this section.

\begin{figure}
    \centering
    \includegraphics[width=\linewidth]{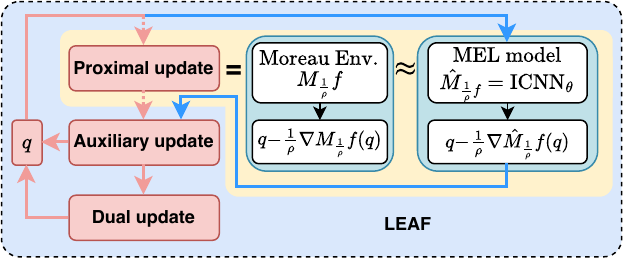}
    \caption{Schematic overview of the Learning-Enabled ADMM Framework (LEAF).}
    \label{fig:MEL}
    \vspace{-1em}
\end{figure}

\subsection{Moreau envelope learning (MEL) model}
\label{sec:MEL-model}

To efficiently approximate the Moreau envelope $M_{\frac{1}{\rho}}f$ and its gradient $\nabla M_{\frac{1}{\rho}}f$, we develop a neural network-based model $\hat{M}_{\frac{1}{\rho}}f$ of $M_{\frac{1}{\rho}}f$ tailored for integration within ADMM iterations. 
Previous studies that use Gaussian process models \cite{Duarte2026communicationefficient, nghiem2018learning} suffer from increasing inference cost as the dataset grows and lack theoretical guarantees, making them unsuitable for fast iterative optimization.
Neural networks enable fast inference and gradient evaluations, making them well-suited for replacing computationally expensive primal updates.

Since the ME is convex, the learning model should preserve this property to ensure consistency with the underlying optimization problem.
To enforce convexity, we employ an ICNN \cite{amos2017input}, which guarantees that the learned function is convex with respect to its input.
The structure of an $\ell$-layer ICNN with input $x$ and output $\ICNN_\theta(x)$ is given by
\begin{align}
\label{eq:ICNN}
\hat{M}_{\frac{1}{\rho}}f(x):
\begin{cases}
    v_0 = \phi_0(W_0 x + b_0),\\
    v_{j+1} = \phi_j (W_j v_j + V_j x + b_j),\\
    \qquad\qquad\qquad\qquad j = 0, \dots, \ell-1,\\
    \ICNN_\theta(x) = \phi_\ell(W_\ell v_\ell + V_\ell x + b_\ell),
\end{cases}
\end{align}
where $v_0$ represents the activation of the initial hidden layer, $W_0$ is the weight matrix connecting the input $x$ to this first layer, $\phi_j$ is the nonlinear activation function for layer $j$,
$\theta = (\{W_j, b_j\}_{j = 0,\dots,\ell}, \{V_j\}_{j=1, \dots, \ell})$ denotes the set of all parameters.
All functions $\phi_j$ are convex and non-decreasing, and all $W_j$ are non-negative.
Under these conditions, $\ICNN_\theta(x)$ is 
a convex function of $x$.
This architecture ensures convexity by constraining weights and activation functions, allowing the learned ME model $\hat{M}_{\frac{1}{\rho}}f$ to inherit key properties of the ME.
Unlike direct proximal operator learning \cite{meinhardt2017learning}, this approach leverages ICNNs to learn the scalar-valued ME, reducing output dimensionality and improving scalability.
The overall framework is illustrated in Fig.~\ref{fig:MEL}, where the learned ME model $\hat M_{\frac{1}{\rho}} f$ serves as an approximation of the ME $M_{\frac{1}{\rho}} f$, from which the proximal update is computed.

If the objective function $f$ is strongly convex, its ME $M_{\frac{1}{\rho}}f$ also inherits strong convexity, which can be exploited to further improve the learning model.
\begin{definition}[Strong Convexity]\label{def:strong_convexity}
A function $f \in \Gamma_0(\mathbb{R}^{n})$ is 
strongly convex with modulus $\sigma$ ($\sigma$-strongly convex) if and only if there exists a constant 
$\sigma > 0$ such that  $f - \tfrac{\sigma}{2}\|\cdot\|^{2}$ is convex. 
That is for all  $\lambda \in (0,1)$ and for all $x,y \in \mathbb{R}^{n}$,
\[
f(\lambda x + (1-\lambda)y) 
\le \lambda f(x) + (1-\lambda)f(y) 
- \tfrac{\sigma}{2}\lambda(1-\lambda)\|x-y\|^{2}\text.
\]
\end{definition}

\begin{lemma}{\cite[Lemma 2.23]{planidenStronglyConvex2016}}
    \label{lem:strong-ME}
    The function $f$ is strongly convex if and only if its Moreau envelope is strongly convex.
\end{lemma}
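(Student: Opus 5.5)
I will prove Lemma~\ref{lem:strong-ME} by passing to Fenchel conjugates. Write $M_\lambda f = f \,\square\, \tfrac{1}{2\lambda}\Vert\cdot\Vert_2^2$ (infimal convolution), so that for $f \in \Gamma_0(\bbR^n)$
\begin{equation*}
(M_\lambda f)^* = f^* + \tfrac{\lambda}{2}\Vert\cdot\Vert_2^2 .
\end{equation*}
This identity holds because the quadratic is finite everywhere, so the conjugate of the infimal convolution is the sum of conjugates. I will use the standard duality fact: for $g \in \Gamma_0$, $g$ is $\sigma$-strongly convex if and only if $g^*$ is differentiable with $\tfrac{1}{\sigma}$-Lipschitz gradient, equivalently $\tfrac{1}{2\sigma}\Vert\cdot\Vert^2 - g^*$ is convex. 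Since $M_\lambda f \in \Gamma_0$ by Proposition~\ref{pro:pros}(i), biconjugation lets me move freely between $f$, $M_\lambda f$ and their conjugates.

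For the direction ``$f$ strongly convex $\Rightarrow M_\lambda f$ strongly convex'', assume $f$ is $\sigma$-strongly convex. Then $\nabla f^*$ is $\tfrac{1}{\sigma}$-Lipschitz, so $\nabla (M_\lambda f)^* = \nabla f^* + \lambda I$ is $(\tfrac{1}{\sigma}+\lambda)$-Lipschitz. Hence $M_\lambda f = (M_\lambda f)^{**}$ is $\tfrac{\sigma}{1+\lambda\sigma}$-strongly convex.

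For the converse, assume $M_\lambda f$ is $\mu$-strongly convex. Then $(M_\lambda f)^*$ is differentiable with $\tfrac{1}{\mu}$-Lipschitz gradient, so $f^* = (M_\lambda f)^* - \tfrac{\lambda}{2}\Vert\cdot\Vert^2$ is differentiable. Its gradient $\nabla(M_\lambda f)^* - \lambda I$ is Lipschitz, with constant at most $\tfrac{1}{\mu}-\lambda$. For this constant, I use that $(M_\lambda f)^*$ is convex and its gradient is $\tfrac1\mu$-cocoercive (Baillon--Haddad); subtracting $\lambda I$ from a monotone map whose difference remains monotone (because $f^*$ is convex) yields Lipschitz constant $\tfrac{1}{\mu}-\lambda$. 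In particular $\mu \le 1/\lambda$ is forced, consistent with Proposition~\ref{pro:pros}(i). If $\mu = 1/\lambda$ exactly, then $\nabla f^*$ is constant, so $f^*$ is affine and $f$ is the indicator of a single point. That indicator is strongly convex in the sense of Definition~\ref{def:strong_convexity} with any modulus, since $f - \tfrac{\sigma}{2}\Vert\cdot\Vert^2$ is convex. Otherwise the duality fact gives that $f = f^{**}$ is $\big(\tfrac{1}{\mu}-\lambda\big)^{-1}$-strongly convex.

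The main obstacle is the Lipschitz bound on $\nabla f^*$ in the converse. Naively, the difference of $\nabla(M_\lambda f)^*$ and $\lambda I$ is only $(\tfrac1\mu+\lambda)$-Lipschitz, which also suffices qualitatively. The sharper constant comes from the equivalent convexity formulation: $\tfrac{1}{2\mu}\Vert\cdot\Vert^2 - (M_\lambda f)^*$ is convex, hence $\tfrac{1}{2}(\tfrac1\mu-\lambda)\Vert\cdot\Vert^2 - f^*$ is convex. Together with convexity of $f^*$, this is exactly the $(\tfrac1\mu-\lambda)$-smoothness needed. I would present the argument in this convexity-of-differences form to avoid operator-theoretic detours.
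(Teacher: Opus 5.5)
Your proof is correct. The paper gives no proof of this lemma; it simply cites \cite[Lemma 2.23]{planidenStronglyConvex2016}. So you are supplying a self-contained argument, and it uses the same conjugate machinery the paper applies in Appendix~\ref{appx} for Lemma~\ref{thm:convergence1}. The ingredients are the identity $(M_\lambda f)^\star = f^\star + \tfrac{\lambda}{2}\Vert\cdot\Vert_2^2$ from Lemma~\ref{lem:inf}(ii), the duality between strong convexity and Lipschitz gradients (Lemma~\ref{lm:dual_convex}), and biconjugation.

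What your version adds over the bare citation is explicit constants:
\begin{itemize}
\item $\sigma$-strong convexity of $f$ gives $\tfrac{\sigma}{1+\lambda\sigma}$-strong convexity of $M_\lambda f$;
\item conversely, $\mu$-strong convexity of $M_\lambda f$ forces $\mu \le 1/\lambda$ and $(\tfrac{1}{\mu}-\lambda)^{-1}$-strong convexity of $f$.
\end{itemize}
Since these maps are mutually inverse, if $f$ has largest modulus $\sigma$ then the largest modulus of $M_{\frac{1}{\rho}}f$ is exactly $\tfrac{\sigma\rho}{\rho+\sigma}$. That is the $\sigma_M$ the paper's Remark and Table~\ref{tab:ICNN} call for but never compute.

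A few points to tidy:
\begin{itemize}
\item The identity $(f\odot g)^\star = f^\star + g^\star$ holds for any proper $f,g$. Finiteness of the quadratic is not what justifies it; that kind of qualification is needed for $(f+g)^\star$.
\item Before calling $f^\star$ differentiable, note that it is finite everywhere, which it inherits from $(M_\lambda f)^\star$.
\item Under the usual convention, Baillon--Haddad gives $\mu$-cocoercivity of $\nabla (M_\lambda f)^\star$, not $\tfrac{1}{\mu}$-cocoercivity.
\item The fact that a monotone map minus $\lambda I$ stays monotone does not by itself give the Lipschitz bound for general operators; the gradient structure is what does. So presenting the step via convexity of $\tfrac12(\tfrac1\mu-\lambda)\Vert\cdot\Vert_2^2 - f^\star$, together with convexity of $f^\star$, is the right choice.
\item In the case $\mu = 1/\lambda$, $f$ is the indicator of a point plus a constant. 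That function is still strongly convex, so the conclusion is unaffected.
\end{itemize}
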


Therefore, when $f$ is strongly convex, the above result suggests a stronger structural prior for learning the ME by combining the $\ICNN_\theta$ with a quadratic function.

Finally, to ensure Lipschitz continuity of the gradient $\nabla M_{\frac{1}{\rho}}f$, we select activation functions that are non-decreasing, twice differentiable, and have bounded derivatives.
This choice guarantees smoothness of the learned ME model, which is critical for stable ADMM updates.

In a nutshell, the proposed neural network based framework for learning the ME $M_{\frac{1}{\rho}}f$ is summarized in Table~\ref{tab:ICNN}.

\begin{table*}[!tb]
    \centering
    \begin{tblr}{|c|c|c|c|}
        \hline
        Function $f$ & ME $M_{\frac{1}{\rho}} f$ & Neural network model & Activation function
        \\
        \hline
        Convex & Convex, $\rho$-Lipschitz continuous gradient & $\ICNN_\theta$ &non-decreasing, smooth,
        \\
        \cline{1-3}
        Strongly convex & Strongly convex, $\rho$-Lipschitz continuous gradient & $\ICNN_\theta $+ $\frac{\sigma_M}{2} \Vert.\Vert^2_2$ & $\phi_j^\prime$ and $\phi_j^{\prime\prime}$ are bounded.
        \\
        \hline
    \end{tblr}
    \caption{Configurations for learning the Moreau envelope of a convex function.}
    \label{tab:ICNN}
\end{table*}

\begin{remark}
    If a function is $\sigma$-strongly convex, then it is also $\sigma^\prime$-strongly convex for all $0 < \sigma^\prime \leq \sigma$.
    This allows us to determine the strong convexity modulus $\sigma_M$ of the ME.
    In particular, the largest $\sigma_M > 0$ such that $M_{\frac{1}{\rho}}f - \frac{\sigma_M}{2}\Vert .\Vert_2^2$ is convex will be used.
\end{remark}

\subsection{Training the MEL model}

\begin{algorithm}[t]
\caption{The MEL-ADMM algorithm}
\label{alg:ADMM-MEL}
\begin{algorithmic}[1] 
    \Require Parameters $\rho > 0$, $\gamma \in (0, 2)$,
    initial points $w^0$ and $\alpha^0$,
    and trained MEL model $\hat{M}_{\frac{1}{\rho}}f$
    \Repeat
    \State{$\hat z^{i+1} = w^i + \rho^{-1} \alpha^i - \frac{1}{\rho} \nabla \hat{M}_{\frac{1}{\rho}}f (w^i + \rho^{-1} \alpha^i)$}
        \State{$\tilde z^{i+1} =  \gamma \hat z^{i+1} + (1-\gamma)w^i$}
        \State{$w^{i+1} = \Pi_{\bar \calZ}(\tilde z^{i+1} - \rho^{-1} \alpha^i)$}
        \State{$\alpha^{i+1} = \alpha^i + \rho(w^{i+1} - \tilde z^{i+1})$}
    \Until The termination criterion is satisfied
\end{algorithmic}
\end{algorithm}

To train the MEL model, 
we construct a dataset  $\{(q^{i}, M_{\frac{1}{\rho}}f(q^{i}), \nabla M_{\frac{1}{\rho}}f(q^{i}))\}_{i=1,\dots,n_D}$ obtained from the ADMM iteration \eqref{eq:ADMM-gen}, where each sample consists of a query point $q_i = w^i + \rho^{-1} \alpha^i$, the corresponding ME value $M_{\frac{1}{\rho}}f(q^{i})$, and its gradient $\nabla M_{\frac{1}{\rho}}f(q^{i})$.
The ME value and its gradient are determined by the minimum value of \eqref{eq:ADMM-gen-z} and by \eqref{eq:MErelationship}.
To systematically collect the data across various initial conditions, we sample the initial points $w^0 \in \calZ$ and $\alpha^0 \in \bbR^n$ using normal distributions, then execute the iteration \eqref{eq:ADMM-gen} 
to obtain the data.
The data collection is performed offline, allowing us to gather as much data as needed to train the MEL model.

The training loss function consists of the output loss term
\begin{equation*}
\calL_{\mathrm{out}} = \sum_{i=1}^{n_D} \left( \hat{M}_{\frac{1}{\rho}}f(q^i) - M_{\frac{1}{\rho}}f(q^i) \right)^2
\end{equation*}
and a gradient loss term for matching the ME gradient values $\nabla M_{\frac{1}{\rho}}f(q^{i})$ in the training data
\begin{multline}
\label{eq:loss-function-grad}
\calL_{\mathrm{grad}} = \sum_{i=1}^{n_D} \Big( \mu_g \left\| \nabla \hat{M}_{\frac{1}{\rho}}f(q^i) - 
    \nabla M_{\frac{1}{\rho}}f(q^i) \right\|_2^2 
     \\
    + \mu_p \Vert \max(0, \hat{M}_{\frac{1}{\rho}}f(q^i) - f(q^i)) \Vert_2^2 \Big)\text.
\end{multline}
In \eqref{eq:loss-function-grad}, the weights $\mu_g$ and $\mu_p$ are positive, and the second term $\| \max(0,\cdot)\|_2^2$) is designed to enforce property (ii) in Proposition~\ref{pro:pros} during the training process.

Putting everything together, the MEL model $\hat{M}_{\frac{1}{\rho}}f$ is trained by minimizing the total loss function
\begin{equation*}
\minimize_\theta \; \calL_{\mathrm{out}} + \calL_{\mathrm{grad}}
\end{equation*} 
subject to $W_i$ being element-wise nonnegative. 

\subsection{The MEL-ADMM algorithm}
\label{sec:MEL-ADMM}

Once the MEL model $\hat{M}_{\frac{1}{\rho}}f$ is trained, it is integrated into the ADMM iteration \eqref{eq:ADMM-gen} by replacing the $z$-update in \eqref{eq:ADMM-gen-z} with
\begin{equation}
\hat z^{i+1} = q^i - \rho^{-1} \nabla \hat{M}_{\frac{1}{\rho}}f (q^i)\text.
\label{eq:learning-ADMM}    
\end{equation}
Here we use $\hat z$ to distinguish the predicted decision vector $z$ from the exact one.
The resulting algorithm is termed MEL-ADMM and is presented in Algorithm~\ref{alg:ADMM-MEL}.

Compared to the ADMM iteration \eqref{eq:ADMM-gen}, the MEL-ADMM has a significant computational advantage.
Specifically, instead of solving the potentially expensive optimization problem \eqref{eq:ADMM-gen-z} of the $z$-update, 
the $\hat z$-update in \eqref{eq:learning-ADMM} of MEL-ADMM is an inexpensive forward computation (inference) of the neural network model $\hat{M}_{\frac{1}{\rho}}f$, thus the computation time can be shortened.
In addition, the MEL model is independent of the problem instance, therefore, once trained, MEL-ADMM can be applied to solve any instance of \eqref{eq:general_opti} without retraining the MEL model.

Observe that $\bar\calZ$ incorporates both a convex set $\calZ$ and an affine equality constraint, hence the projection onto $\bar\calZ$ in \eqref{eq:ADMM-gen-w} at each iteration can be computationally expensive.
The MEL-ADMM algorithm can be further accelerated by treating these constraints separately, as presented next.

\subsection{The splitting MEL-ADMM (sMEL-ADMM) algorithm}
\label{sec:sMEL-ADMM}

The constraints \eqref{eq:gen_opti_eq_cstr} and \eqref{eq:gen_opti_ieq_str} can be split into two sets by introducing another auxiliary variable $v \in \bbR^n$, leading to an equivalent form of problem \eqref{eq:general_opti} as
\begin{subequations}
\label{eq:general_opti_con_split}
\begin{optimization}{f(z) + I_{\calZ}(w) + I_{\calA}(v)}
    &w - z = 0, \\
    &w - v = 0,
\end{optimization}
\end{subequations}
where $\calA = \{z \in \bbR^n | Az = b \}$.
The augmented Lagrangian of \eqref{eq:general_opti_con_split} is expressed as
\begin{align*}
L(z, w, v, \alpha, \beta) = & f(z) 
+ I_{\calZ} (w) + I_{\calA} (v)
\nonumber\\ 
&+ \alpha^\top(w - z) + \frac{\rho}{2}  \Vert w - z \Vert^2_2
\\
&+ \beta^\top(w - v) + \frac{\rho_v}{2}  \Vert w - v \Vert^2_2\text.
\end{align*}
Inspired by operator splitting methods \cite{Stellato2018OSQP} and based on the reformulation \eqref{eq:general_opti_con_split}, we propose a method named splitting 
MEL-ADMM (sMEL-ADMM) in the following form
\begin{subequations} \label{eq:admm-update_split}
\begin{align}
    z^{i+1} &= \argmin f(z) + \frac{\rho}{2} \Vert w^i - z + \rho^{-1}\alpha^i \Vert_2^2 
    \nonumber \\
    &= \prox_{\frac{1}{\rho} f} \left( w^i + \rho^{-1} \alpha^i\right) 
    \label{eq:admm_z-update_split}
    \\
    \tilde z^{i+1} &= \gamma z^{i+1} + (1-\gamma) w^i,
    \label{eq:admm_tz-update_split}
    \\
    w^{i+1} & = \argmin_{w \in \calZ } \Vert w \!-\! \tilde z^{i+1} \!+\! \rho^{-1} \alpha^i\Vert_2^2 \!+\! 
    \Vert w \!-\! v^i  \!+\! \rho_v^{-1} \beta^i \Vert_2^2 \nonumber
    \\
    &= \Pi_{\calZ } \left(\frac{\tilde z^{i+1} +  v^i - \rho^{-1} \alpha^i  - \rho_v^{-1} \beta^i}{2} \right),
    \label{eq:admm_w-update_split}
    \\
    v^{i+1} & = \Pi_{\calA} \left( w^{i+1} + \rho^{-1} \beta^i  \right),
    \label{eq:admm_v-update_split}
    \\
    \alpha^{i+1} &= \alpha^i + \rho(w^{i+1} - \tilde z^{i+1}),
    \label{eq:admm_al-update_split}
    \\
    \beta^{i+1} &= \beta^i + \rho_v (w^{i+1} - v^{i+1}).
    \label{eq:admm_be-update_split}
\end{align}
\end{subequations}
Note that the $z$-update \eqref{eq:admm_z-update_split} can be replaced by the MEL model as in MEL-ADMM. 
Since $\calA$ is an affine set, the $v$-update \eqref{eq:admm_v-update_split} 
admits a closed-form solution by directly solving the 
KKT conditions as follows:
\begin{align}\label{eq:closed-form-v-update}
    \begin{bmatrix} I &A^\top \\ A &0\end{bmatrix} \begin{bmatrix} v^{i+1} \\ \lambda^{i+1} \end{bmatrix} = \begin{bmatrix} w^{i+1} + \rho^{-1}\beta^i \\ b \end{bmatrix},
\end{align}
where $\lambda^{i+1} \in \bbR^m$ is the Lagrange multiplier associated with the equality constraint $Av = b$.
The matrix on the left-hand side can be decomposed, \eg using LU decomposition, prior to the ADMM iterations, allowing the linear system \eqref{eq:closed-form-v-update} to be solved efficiently at each iteration.

Employing the MEL model $\hat{M}_{\frac{1}{\rho}}f$ in \eqref{eq:admm_z-update_split} and the closed-form $v$-update in \eqref{eq:closed-form-v-update}, the sMEL-ADMM algorithm is presented in Algorithm~\ref{alg:ADMM-MEL-split}.
In practice, the set $\calZ$ is often simple, such as a box or polyhedron set, allowing an analytical form of the $w$-update \eqref{eq:admm_w-update_split} to be obtained. 
In such a case, Algorithm~\ref{alg:ADMM-MEL-split} completely eliminates the need for an optimization solver, 
which significantly accelerates the computation.

\begin{algorithm}[t]
\caption{The sMEL-ADMM algorithm}
\label{alg:ADMM-MEL-split}
\begin{algorithmic}[1] 
    \Require Parameters $\rho > 0$, $\gamma \in (0, 2)$,
    initial points $v^0$, $\alpha^0$, $\beta^0$,
    and trained MEL model $\hat{M}_{\frac{1}{\rho}}f$.

    \Repeat
        \State{$ \hat z^{i+1} = w^i + \rho^{-1} \alpha^i - \frac{1}{\rho} \nabla \hat{M}_{\frac{1}{\rho}}f (w^i + \rho^{-1} \alpha^i)$}
        \State{$\tilde z^{i+1} = \gamma \hat z^{i+1} + (1-\gamma) w^i$}
        \State{$w^{i+1}, v^{i+1}$, $\alpha^{i+1}$, and $\beta^{i+1}$ 
        by \eqref{eq:admm_w-update_split}-\eqref{eq:admm_be-update_split}, \eqref{eq:closed-form-v-update}} 
    \Until{Termination criterion is satisfied}
\end{algorithmic}
\end{algorithm}
\section{Theoretical Analysis}
\label{sec:analysis}

In this section, for 
analyzing the convergence of the (s)MEL-ADMM algorithms, we consider the MEL model $\hat{M}_{\frac{1}{\rho}} f$ as an ICNN, $\ICNN_\theta$, parameterized by $\theta$ as in \eqref{eq:ICNN}.

To prove Lipschitz continuity of $\nabla \ICNN_{\theta}(x)$, we consider $\ICNN_{\theta}(x)$ in the following form
\begin{align*}
y_0 &= W_0 x + b_0,
\\
v_0 &= \phi_0(y_0) \quad \text{(element-wise)}, 
\\
y_j &= W_j v_{j-1} + V_j x + b_j,
\\
v_j &= \phi_j(y_j) \quad \text{(element-wise)},\;j=1,\dots,L, \\
h(x) &= \ICNN_\theta(x)=\phi_L(y_L),
\end{align*}

\begin{lemma}
    \label{lem:LipschitzNN}
    The ICNN \eqref{eq:ICNN} with finite weights has Lipschitz-continuous gradient
    if all activation functions $\phi_j$ are continuous twice-differentiable and their derivatives
    $|\phi_j^{\prime}|$ and $|\phi_j^{\prime\prime}| $ are bounded (\eg softplus, sigmoid, tanh).
\end{lemma}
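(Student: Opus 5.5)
The plan is an induction over the layers of the recursive form of $\ICNN_\theta$ given just above the lemma. For each layer $j$ we will show that the map $x \mapsto v_j(x)$ has two properties. First, it is Lipschitz continuous, so its Jacobian $J_j(x) = \partial v_j/\partial x$ is bounded. Second, that Jacobian is itself Lipschitz continuous in $x$. Then $\nabla h(x) = J_L(x)^\top$ follows, since $h = \phi_L(y_L)$ is scalar, and the claim is the case $j = L$. We use only two tools. The first is the chain rule. The second is the elementary fact that if $F$ and $G$ are bounded and Lipschitz, then so is the product $FG$: indeed $\Vert F(x)G(x)-F(y)G(y)\Vert \le \sup\Vert F\Vert\, L_G\Vert x-y\Vert + \sup \Vert G\Vert\, L_F \Vert x-y\Vert$.

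Write $B_1 = \sup|\phi_j'|$ and $B_2 = \sup|\phi_j''|$, taken over all $j$. These are finite by assumption, and all weight matrices have finite norms. The key steps are as follows.

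\textbf{Base case.} We have $J_0(x) = \mathrm{diag}(\phi_0'(y_0))W_0$, so $\Vert J_0\Vert \le B_1\Vert W_0\Vert$. Since $\phi_0'$ has bounded derivative $\phi_0''$, it is $B_2$-Lipschitz by the mean value theorem. Together with $y_0$ being $\Vert W_0\Vert$-Lipschitz, this gives that $J_0$ is Lipschitz with constant $B_2\Vert W_0\Vert^2$.

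\textbf{Induction step.} Suppose $J_{j-1}$ is bounded by $K_{j-1}$ and Lipschitz with constant $L_{j-1}$. Then $y_j$ has Jacobian $D_j(x) = W_jJ_{j-1}(x) + V_j$. This Jacobian is bounded by $\Vert W_j\Vert K_{j-1} + \Vert V_j\Vert$ and is Lipschitz with constant $\Vert W_j\Vert L_{j-1}$. In particular $y_j$ is a Lipschitz map of $x$. Next, $J_j(x) = \mathrm{diag}(\phi_j'(y_j(x)))D_j(x)$. The factor $\mathrm{diag}(\phi_j'(y_j(x)))$ is bounded by $B_1$ and is Lipschitz, being a composition of the $B_2$-Lipschitz $\phi_j'$ with the Lipschitz $y_j$. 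The factor $D_j$ is bounded and Lipschitz by the above. The product fact therefore gives bounds $K_j$ and $L_j$ for $J_j$.

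\textbf{Conclusion.} At $j = L$ the output is scalar, so $\nabla h = \phi_L'(y_L)D_L(x)^\top$ is Lipschitz with an explicit constant built recursively from $B_1$, $B_2$ and the weight norms.

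The point that needs care, and the main obstacle, is the boundedness of the Jacobians. A Lipschitz derivative alone is not enough to make a product of Jacobians Lipschitz: we also need a uniform sup bound on each factor over all of $\bbR^n$. This is exactly why the lemma assumes bounded $\phi_j'$ in addition to bounded $\phi_j''$. It also explains the examples: softplus, sigmoid and tanh all have bounded first and second derivatives on $\bbR$. So the induction hypothesis must carry both the sup bound $K_j$ and the Lipschitz constant $L_j$ together, rather than the Lipschitz property alone.

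Finally, the quadratic term $\frac{\sigma_M}{2}\Vert\cdot\Vert^2$ from Table~\ref{tab:ICNN} adds the gradient $\sigma_M x$, which is $\sigma_M$-Lipschitz. So the strongly convex model inherits the conclusion as well.
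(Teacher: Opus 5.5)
Your proof is correct. It follows the same layer-by-layer chain-rule recursion as the paper, $J_j=\operatorname{diag}(\phi_j'(y_j))(W_jJ_{j-1}+V_j)$, but it stops one derivative earlier.

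The paper writes out the Hessian of each neuron,
\[
\nabla_x^2(v_j)_i=\phi_j''((y_j)_i)\,g_{j,i}g_{j,i}^\top+\phi_j'((y_j)_i)\sum_k((W_j)_i)_k\,\nabla_x^2(v_{j-1})_k .
\]
It argues that every term is uniformly bounded. Only at the end does it use the mean value theorem to turn $\sup_x\Vert\nabla_x^2h\Vert$ into a Lipschitz constant for $\nabla h$.

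You never form a Hessian. Instead you propagate the pair $(K_j,L_j)$, a sup bound and a Lipschitz constant for $J_j$, through the product rule for bounded Lipschitz maps. You apply the mean value theorem only to the scalar functions $\phi_j'$. The two recursions have the same shape. Your term $B_2(\Vert W_j\Vert K_{j-1}+\Vert V_j\Vert)^2$ plays the role of the $\phi_j''\,g_{j,i}g_{j,i}^\top$ term. Your term $B_1\Vert W_j\Vert L_{j-1}$ plays the role of the $\phi_j'\sum W\,\nabla^2 v_{j-1}$ term. So the resulting constants agree up to the choice of norms.

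Your version buys two things.
\begin{itemize}
\item It only needs each $\phi_j'$ to be bounded and Lipschitz. It therefore also covers activations whose derivative is Lipschitz but not differentiable.
\item It states explicitly the uniform bound on $J_{j-1}$, and hence on $g_{j,i}$. The paper's step ``each term in the summation remains bounded'' relies on this bound without saying so.
\end{itemize}
The paper's version gives an explicit closed-form Hessian. That is convenient if one wants to estimate $L$ numerically to check the condition $L\le\rho$.
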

\begin{IEEEproof}
    See Appendix \ref{appx:lipschitz}.
\end{IEEEproof}

We now demonstrate that the MEL-ADMM algorithm always converges. 
Before stating the main theorem on convergence, we establish the following useful preliminary result. 

\begin{lemma}
\label{thm:convergence1}
Let $g$ be a convex differentiable function, whose gradient is Lipschitz continuous with constant $L = \frac{1}{\lambda}$.
There exists a unique  $ f \in \Gamma_0(\bbR^n)$ such that $M_{\lambda}f = g$.
\end{lemma}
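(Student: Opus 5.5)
Our plan is to work through Fenchel conjugates, using the identity $M_\lambda f = f \,\square\, \frac{1}{2\lambda}\|\cdot\|_2^2$ (infimal convolution). For $f \in \Gamma_0(\bbR^n)$ this gives
\begin{equation*}
(M_\lambda f)^* = f^* + \frac{\lambda}{2}\|\cdot\|_2^2 .
\end{equation*}
So $M_\lambda f = g$ should be equivalent to $f^* = g^* - \frac{\lambda}{2}\|\cdot\|_2^2$. Accordingly, we define
\begin{equation*}
h := g^* - \frac{\lambda}{2}\|\cdot\|_2^2, \qquad f := h^* .
\end{equation*}
The proof then amounts to three checks: this $f$ lies in $\Gamma_0(\bbR^n)$, it satisfies $M_\lambda f = g$, and it is the only such function.

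\textbf{Step 1 (main obstacle): show $h \in \Gamma_0(\bbR^n)$.} Here we use the Baillon--Haddad theorem together with the standard duality between smoothness and strong convexity. Since $g$ is convex, differentiable and finite everywhere, with $\frac{1}{\lambda}$-Lipschitz gradient, its conjugate $g^*$ is proper, lower semi-continuous and $\lambda$-strongly convex. By Definition~\ref{def:strong_convexity}, this means exactly that $g^* - \frac{\lambda}{2}\|\cdot\|_2^2$ is convex.

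We would prove the strong convexity of $g^*$ directly. From the Baillon--Haddad cocoercivity inequality
\begin{equation*}
\langle \nabla g(x) - \nabla g(y), x - y\rangle \ge \lambda \|\nabla g(x) - \nabla g(y)\|_2^2 ,
\end{equation*}
the subdifferential $\partial g^* = (\nabla g)^{-1}$ is $\lambda$-strongly monotone. Since $g^*$ is convex and l.s.c., strong monotonicity of its subdifferential yields strong convexity of $g^*$.

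Properness and lower semi-continuity of $h$ then follow from those of $g^*$, because subtracting a finite continuous quadratic preserves both. Properness of $g^*$ holds because $g$ is proper with an affine minorant. Hence $h \in \Gamma_0(\bbR^n)$, and therefore $f = h^* \in \Gamma_0(\bbR^n)$ with $f^* = h^{**} = h$ by Fenchel--Moreau.

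\textbf{Step 2: verify $M_\lambda f = g$.} Using the conjugate identity above,
\begin{equation*}
(M_\lambda f)^* = f^* + \frac{\lambda}{2}\|\cdot\|_2^2 = h + \frac{\lambda}{2}\|\cdot\|_2^2 = g^* .
\end{equation*}
Both $M_\lambda f$ and $g$ lie in $\Gamma_0(\bbR^n)$: the first by Proposition~\ref{pro:pros}, since it is convex and finite. Taking biconjugates via Fenchel--Moreau gives $M_\lambda f = g$.

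The conjugate identity itself is $(f \,\square\, k)^* = f^* + k^*$ with $k = \frac{1}{2\lambda}\|\cdot\|_2^2$. This holds for any proper $f$ and needs no qualification condition, and $k^* = \frac{\lambda}{2}\|\cdot\|_2^2$.

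\textbf{Step 3: uniqueness.} Suppose $f_1, f_2 \in \Gamma_0(\bbR^n)$ satisfy $M_\lambda f_1 = M_\lambda f_2 = g$. Conjugating gives
\begin{equation*}
f_1^* + \frac{\lambda}{2}\|\cdot\|_2^2 = f_2^* + \frac{\lambda}{2}\|\cdot\|_2^2 = g^* .
\end{equation*}
The quadratic is finite everywhere, so it cancels and $f_1^* = f_2^*$. Fenchel--Moreau then gives $f_1 = f_2$.

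An alternative route to uniqueness avoids conjugates. Equal envelopes give equal gradients, so by \eqref{eq:MErelationship} the proximal operators coincide. Since $\prox_{\lambda f}$ determines $\partial f$ and hence $f$ up to an additive constant, the constant is then fixed by the envelope values.

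The delicate part of the whole argument is Step 1, specifically confirming that $h$ is genuinely convex and proper rather than merely bounded below; the Baillon--Haddad inequality is what delivers this.
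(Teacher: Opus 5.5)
Your proposal is correct and follows essentially the same route as the paper. Both arguments define $f=(g^\star-\frac{\lambda}{2}\Vert\cdot\Vert_2^2)^\star$, then use the infimal-convolution identity $(M_\lambda f)^\star=f^\star+\frac{\lambda}{2}\Vert\cdot\Vert_2^2$ together with Fenchel--Moreau biconjugation for both existence and uniqueness. You differ only in detail: you derive the $\lambda$-strong convexity of $g^\star$ through Baillon--Haddad and strong monotonicity instead of citing Rockafellar--Wets Proposition 12.60, and you spell out properness, lower semicontinuity, and the biconjugation step behind $M_\lambda f=g$, which the paper passes over quickly.
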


\begin{IEEEproof}
See Appendix \ref{appx}.
\end{IEEEproof}

To prove the convergence of Algorithm~\ref{alg:ADMM-MEL}, let us consider its scaled form as follows, with $\bar\alpha^i =  \frac{1}{\rho} \alpha^i$, 
\begin{subequations}
\label{eq:ADMM_gen_xi}
\begin{align}
\hat z^{i+1} &= w^i + \bar \alpha^i - \rho^{-1} \nabla \ICNN_\theta (w^i + \bar \alpha^i),
\label{eq:ADMM_gen_xi-z}
\\
\tilde z^{i+1} &=  \gamma \hat z^{i+1} + (1-\gamma)w^i,
\\
w^{i+1} &= \Pi_{\bar \calZ}(\tilde z^{i+1} - \bar \alpha^i),
\\
\bar \alpha^{i+1} &=\bar \alpha^i +  w^{i+1} - \tilde z^{i+1}\text.
\end{align}
\end{subequations}
Define $g(x) = \ICNN_\theta(x)$.
By Lemma \ref{lem:LipschitzNN}, $\nabla g(x)$ is Lipschitz continuous.
Let $L$ be a Lipschitz constant of $\nabla g$ and suppose that $L \leq \rho$.
Lemma~\ref{thm:convergence1} guarantees a unique function $\hat f \in \Gamma_0(\bbR^n)$ such that $M_{\frac{1}{\rho}}\hat f=g$ because $\nabla g$ is also $\rho$-Lipschitz continuous.
We then define the following optimization problem
\begin{equation}
\label{eq:equivalent}
\minimize \; \hat{f}( z)
\;\; \text{s.t.} \;\;
\eqref{eq:gen_opti_eq_cstr}, \eqref{eq:gen_opti_ieq_str}\text.
\end{equation}
The feasibility of the constraint set has already been assumed in Section~\ref{sec:general_opti}.
The following theorem establishes the convergence of MEL-ADMM to a solution of \eqref{eq:equivalent}.

\begin{theorem}
\label{thm:cvg_MEL-ADMM}
If $L \leq \rho$, the MEL-ADMM Algorithm \ref{alg:ADMM-MEL} converges to an optimal solution of \eqref{eq:equivalent}.
\end{theorem}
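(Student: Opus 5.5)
The plan is to show that MEL-ADMM is \emph{exactly} the classical over-relaxed ADMM \eqref{eq:ADMM-gen} applied to the surrogate problem \eqref{eq:equivalent}. Convergence then follows from the standard convergence theory of over-relaxed ADMM for convex problems.

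\textbf{Step 1: identify the learned update as an exact proximal step.} Since $L \le \rho$, the gradient $\nabla g$ is $\rho$-Lipschitz. Lemma~\ref{thm:convergence1} with $\lambda = 1/\rho$ then yields a unique $\hat f \in \Gamma_0(\bbR^n)$ with $M_{\frac{1}{\rho}}\hat f = g$. Applying the prox--envelope identity \eqref{eq:MErelationship} to $\hat f$ gives
\[
\prox_{\frac{1}{\rho}\hat f}(q) = q - \rho^{-1}\nabla M_{\frac{1}{\rho}}\hat f(q) = q - \rho^{-1}\nabla \ICNN_\theta(q).
\]
Hence the update \eqref{eq:ADMM_gen_xi-z} is precisely
\[
\hat z^{i+1} = \argmin_z \; \hat f(z) + \tfrac{\rho}{2}\Vert w^i - z + \bar\alpha^i\Vert_2^2 .
\]
This is the $z$-step \eqref{eq:ADMM-gen-z} with $f$ replaced by $\hat f$. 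The remaining steps of \eqref{eq:ADMM_gen_xi} (relaxation, projection onto $\bar\calZ$, scaled dual update) coincide verbatim with \eqref{eq:ADMM-gen} in scaled form. Therefore Algorithm~\ref{alg:ADMM-MEL} is over-relaxed ADMM with $\gamma\in(0,2)$ applied to the splitting $\min\, \hat f(z) + I_{\bar\calZ}(w)$ s.t. $w - z = 0$. This splitting is equivalent to \eqref{eq:equivalent}.

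\textbf{Step 2: invoke ADMM convergence.} Both $\hat f$ and $I_{\bar\calZ}$ lie in $\Gamma_0(\bbR^n)$: $\bar\calZ$ is nonempty by the feasibility assumption, and closed if $\calZ$ is closed, which I would state explicitly. The constraint matrices are $I$ and $-I$, so both subproblems are uniquely solvable. The standard result (Eckstein--Bertsekas, via Douglas--Rachford / firm nonexpansiveness of the relaxed reflected resolvent) then gives the following. If the unaugmented Lagrangian of the split problem has a saddle point, the iterates satisfy $w^i - \hat z^{i} \to 0$, $\hat f(\hat z^i) \to p^\star$, and $\bar\alpha^i$ converges to a dual optimum. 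For $\gamma \in (0,2)$ the iterates in fact converge to a primal--dual solution. Since $w^{i}\in\bar\calZ$ always, the primal limit is feasible and optimal for \eqref{eq:equivalent}.

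\textbf{Main obstacle.} The delicate point is guaranteeing that a saddle point exists, i.e.\ that \eqref{eq:equivalent} attains its minimum and strong duality holds. For the first, I would note that $\hat f$ is finite at points of $\Dom(\hat f)$, but $\Dom(\hat f)$ may be restricted. I would try to argue existence using $\hat f \ge M_{\frac{1}{\rho}}\hat f = g$ together with coercivity or boundedness of $\bar\calZ$, or else add it as a standing assumption that \eqref{eq:equivalent} has a solution. For strong duality in the form $0 \in \partial \hat f(z^\star) + N_{\bar\calZ}(z^\star)$, I would use a constraint qualification $\mathrm{ri}\,\Dom\hat f \cap \mathrm{ri}\,\bar\calZ \ne \emptyset$. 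This holds automatically if $\hat f$ has full domain, which can be checked from $\hat f = (g^* - \tfrac{1}{2\rho}\Vert\cdot\Vert^2)^*$ when $g^*$ behaves well; otherwise I would impose it as an assumption. With these in place, Step 2 closes the proof.
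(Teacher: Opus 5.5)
Your proposal is correct and follows the paper's proof: Lemma~\ref{thm:convergence1} supplies $\hat f$ with $M_{\frac{1}{\rho}}\hat f = g$, the identity \eqref{eq:MErelationship} turns the $\hat z$-step into the exact proximal step of $\hat f$, and convergence is inherited from over-relaxed ADMM applied to \eqref{eq:equivalent}. At that last step the paper cites only $\bar\calZ\neq\emptyset$ and $\gamma\in(0,2)$, so the hypotheses you flag (closed $\calZ$ and a KKT point for \eqref{eq:equivalent}) are genuinely necessary and tacitly assumed there---an affine ICNN output $g(x)=c^\top x$ with $c\neq 0$ gives $\hat f(x)=c^\top x+\frac{1}{2\rho}\Vert c\Vert_2^2$, which is unbounded below on any $\bar\calZ$ containing a ray in direction $-c$---whereas for $L<\rho$ strictly, $\hat f^\star=g^\star-\frac{1}{2\rho}\Vert\cdot\Vert_2^2$ is strongly convex, so $\hat f$ is finite and smooth on $\bbR^n$, your constraint qualification holds automatically, and the KKT requirement reduces to attainment of the minimum in \eqref{eq:equivalent}.
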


\begin{IEEEproof}
Since $M_{\frac{1}{\rho}}\hat f=g$, it follows from \eqref{eq:MErelationship} that
\eqref{eq:ADMM_gen_xi-z} is equivalent to 
\begin{align*}
\hat z^{i+1} &= \prox_{\frac{1}{\rho} \hat{f}} (q^i) = \argmin_z \hat f(z)  + \frac{\rho}{2} \Vert w^i - z +  \bar \alpha^i\Vert_2^2.
\end{align*}
Therefore, the iteration of MEL-ADMM is equivalent to the scaled form of ADMM \eqref{eq:ADMM-gen-z}-\eqref{eq:ADMM-gen-b} for solving \eqref{eq:equivalent}.
Since $\bar{\calZ}$ is a non-empty set and $\gamma \in (0,2)$, the scaled ADMM convergences to an optimal solution of \eqref{eq:equivalent}.
\end{IEEEproof}

Using a similar argument as for MEL-ADMM, we can establish the convergence of sMEL-ADMM as follows.

\begin{proposition}
\label{prop:sMEL-ADMM}
If $L \leq \rho$, the sMEL-ADMM Algorithm \ref{alg:ADMM-MEL-split} converges to an optimal solution of \eqref{eq:equivalent}.
\end{proposition}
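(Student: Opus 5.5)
We will argue exactly as in the proof of Theorem~\ref{thm:cvg_MEL-ADMM}: reduce sMEL-ADMM to an exact ADMM applied to a convex problem built from $\hat f$, and then invoke a standard ADMM convergence result.

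\textbf{Step 1: identify the $z$-update with an exact proximal step.} Since $L\le\rho$, Lemma~\ref{lem:LipschitzNN} and Lemma~\ref{thm:convergence1} provide a unique $\hat f\in\Gamma_0(\bbR^n)$ with $M_{\frac{1}{\rho}}\hat f=g=\ICNN_\theta$. By \eqref{eq:MErelationship}, the first line of Algorithm~\ref{alg:ADMM-MEL-split} then reads
\[
\hat z^{i+1}=\prox_{\frac{1}{\rho}\hat f}(w^i+\rho^{-1}\alpha^i).
\]
This is precisely \eqref{eq:admm_z-update_split} with $f$ replaced by $\hat f$. The remaining updates \eqref{eq:admm_tz-update_split}--\eqref{eq:admm_be-update_split} do not involve $f$ at all. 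Hence sMEL-ADMM coincides with the exact iteration \eqref{eq:admm-update_split} for the split problem \eqref{eq:general_opti_con_split} with objective $\hat f(z)+I_\calZ(w)+I_\calA(v)$. That problem is equivalent to \eqref{eq:equivalent}.

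\textbf{Step 2: cast \eqref{eq:admm-update_split} as a two-block ADMM.} Group the variables as $x_1=z$ and $x_2=(w,v)$, with the stacked constraint $[I;I]w-[I;0]z-[0;I]v=0$. Scale the dual variables and residuals so that the two penalties $\rho,\rho_v$ appear as a single weighted norm. The $(w,v)$ step of \eqref{eq:admm-update_split} is a Gauss--Seidel sweep, $w$ first and then $v$. It is not a joint minimization over $(w,v)$, so the cleanest route is the other ordering of blocks.

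Treat $x_1=(z,v)$ as one block. Given $w^i$ and the duals, the minimizations over $z$ and over $v$ are decoupled, so updating them jointly is exact. Treat $x_2=w$ as the second block. In this ordering, each iteration of \eqref{eq:admm-update_split} is a cyclic shift of a genuine two-block over-relaxed ADMM, as in the OSQP analysis \cite{Stellato2018OSQP}.

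The step to watch is the relaxation. In \eqref{eq:admm-update_split}, over-relaxation is applied only to the $z$-residual, whereas standard relaxed ADMM relaxes the whole first-block output. If $\rho=\rho_v$, the $w$-update is exactly the unweighted average appearing in \eqref{eq:admm_w-update_split}. In that case I would handle the relaxation either by restricting to $\gamma=1$ or by verifying that the $v$-block relaxation is equivalent after a change of variables. This mismatch is where I expect the main obstacle. My first attempt would be to write the iteration as Douglas--Rachford splitting on the Lagrangian saddle operator, which is maximally monotone, and appeal to the averaged-operator (Krasnosel'ski\u{\i}--Mann) convergence theory for $\gamma\in(0,2)$.

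\textbf{Step 3: conclude.} The functions $\hat f$, $I_\calZ$ and $I_\calA$ belong to $\Gamma_0$. The constraint set is feasible by the standing assumption of Section~\ref{sec:general_opti}, and we assume, as in Theorem~\ref{thm:cvg_MEL-ADMM}, that a saddle point exists. The classical ADMM convergence theorem \cite{boydDistributedOptimization2011}, or its Douglas--Rachford form, then gives convergence of the residuals $w^i-z^i$ and $w^i-v^i$ to zero, of the objective to the optimum, and of the iterates to an optimal solution of \eqref{eq:equivalent}.
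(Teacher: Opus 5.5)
Your Step~1 is exactly the paper's argument. The paper's proof of Proposition~\ref{prop:sMEL-ADMM} is only ``a similar argument as for MEL-ADMM'': it identifies the $\hat z$-step with $\prox_{\frac{1}{\rho}\hat f}$ and asserts that the exact splitting iteration \eqref{eq:admm-update_split} converges. That assertion is the real content, and your Step~2 does not establish it.

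\textbf{The gap.} Your claim that grouping $(z,v)$ against $w$ turns \eqref{eq:admm-update_split} into a cyclic shift of a two-block ADMM does not follow from the separability of $z$ and $v$. As stated, it is false.
Both $z^{i+2}$ and $v^{i+1}$ are computed from $w^{i+1}$. However, $z^{i+2}$ uses $\alpha^{i+1}$, which already contains the residual $w^{i+1}-\tilde z^{i+1}$. By contrast, $v^{i+1}$ uses the older $\beta^i$, and the next $w$-step then pairs $v^{i+1}$ with $\beta^{i+1}\neq\beta^i$.
In a genuine two-block ADMM, both halves of the first block see the same post-$w$ multipliers, and the following $w$-step uses those same multipliers. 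What you actually have is the direct three-block Gauss--Seidel extension $z\to w\to v$. Two-block theory says nothing about that scheme in general, since multi-block direct extensions of ADMM can diverge.
This timing mismatch, not the relaxation, is the main obstacle. It is present even when $\gamma=1$ and $\rho=\rho_v$.

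\textbf{The missing idea: $\calA$ is affine.} Read the $v$-step as $\Pi_{\calA}(w^{i+1}+\rho_v^{-1}\beta^i)$; the paper prints $\rho^{-1}$ there. Then $\beta^{i+1}\in N_{\calA}(v^{i+1})=\operatorname{range}(A^\top)$, which is a subspace orthogonal to the directions of $\calA$. Hence, for $i\ge1$, $\Pi_{\calA}(w^{i+1}+\rho_v^{-1}\beta^i)=\Pi_{\calA}(w^{i+1})$ and $\Pi_{\calA}(v^i-\rho_v^{-1}\beta^i)=v^i$, so the stale multiplier is harmless.
Concretely, set $a^i=\tilde z^{i+1}-\rho^{-1}\alpha^i$ and $b^i=v^i-\rho_v^{-1}\beta^i$. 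One checks that $w^{i+1}=\Pi_{\calZ}\big(\tfrac{a^i+b^i}{2}\big)$ and
\begin{align*}
a^{i+1}&=a^i+\gamma\big(\prox_{\frac{1}{\rho}\hat f}(2w^{i+1}-a^i)-w^{i+1}\big),\\
b^{i+1}&=b^i+\Pi_{\calA}(2w^{i+1}-b^i)-w^{i+1}.
\end{align*}
This is Douglas--Rachford on $\bbR^n\times\bbR^n$ for $I_{\{(w,w):\,w\in\calZ\}}$ and $\frac{1}{\rho}\hat f(x)+I_{\calA}(y)$, with block relaxation $\Lambda=\mathrm{diag}(\gamma I,I)$.

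This reformulation also settles your other two concerns:
\begin{itemize}
\item In scaled variables $\rho_v$ disappears entirely, so you do not need $\rho=\rho_v$.
\item For the firmly nonexpansive Douglas--Rachford map $N$ and $T=I+\Lambda(N-I)$, one has $\|Tx-Ty\|_{\Lambda^{-1}}^2\le\|x-y\|_{\Lambda^{-1}}^2-\|(I-N)x-(I-N)y\|_{2I-\Lambda}^2$. Combined with your saddle-point assumption (Fej\'er monotonicity and Opial), this gives convergence of $(a^i,b^i)$, and hence of $w^i$, for every $\gamma\in(0,2)$. There is no need to restrict to $\gamma=1$.
\end{itemize}
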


Although (s)MEL-ADMM may not recover an exact solution of \eqref{eq:general_opti}, its solution is feasible for the constraints of \eqref{eq:general_opti}.
In addition, by informing the properties of the MEL model, as detailed in Section~\ref{sec:MEL-model} and summarized in Table~\ref{tab:ICNN}, it can achieve high accuracy of approximating the Moreau envelope, hence the solution returned by (s)MEL-ADMM can get sufficiently close to that of \eqref{eq:general_opti}.
This will be demonstrated in the numerical experiments in Section~\ref{sec:experiments}.
    
\begin{remark}
    Methods proposed in \cite{virmaux2018lipschitz, fazlyab2019efficient} can 
    compute an upper bound $L$ on the Lipschitz constant of $\nabla \ICNN_\theta$, 
    which can be used to verify the condition $L \leq \rho$ in the above results. 
\end{remark}

\section{LEAF for Model Predictive Control}
\label{sec:MPC}

The (s)MEL-ADMM algorithms in the LEAF framework, presented in Section~\ref{sec:framework}, can be applied to accelerate model predictive control (MPC) for linear time-invariant (LTI) systems.
Consider the following discrete-time LTI system
\begin{equation}
\label{eq:LTI system}
  x_{t + 1} = A{x_t} + B{u_t} \text,
\end{equation}
where $x_t \in \calX \subseteq \bbR^{n_x}$ is the state,
$u_t \in \calU \subseteq \bbR^{n_u}$ is the control input, 
real matrices $A$ and $B$ are of appropriate dimensions,
and $t$ is the time step.
MPC \cite{mayne2014model} computes a feedback control input by solving the constrained optimization problem
\begin{subequations}
\label{eq:MPC}
\begin{optimization}{J(\bfx, \bfu ) = c_N(x_{N|t}) + \sum_{k=0}^{N-1} c_k(x_{k|t}, u_{k|t})\label{eq:MPC cost function}}
    &x_{k + 1|t} = A x_{k|t} \!+\! B u_{k|t},\label{eq:dynamics_constraint}\\
    &x_{0|t} = x_t,\\
    &(x_{k|t}, u_{k|t}) \in \mathcal{X} \times \mathcal{U},\;k = 0, \dots, N \!-\! 1\text, \label{eq:box_constraint}
\end{optimization}
\end{subequations}
where $N \in \bbN_{>0}$ is the prediction horizon, 
$x_{k|t}$ and  $u_{k|t}$ are the predicted state and input at step $k$ in the horizon, 
$c_k(x_{k|t},u_{k|t})$ is the stage cost function, 
$c_N(x_{N|t})$ is the terminal cost function, and $x_{0|t} = x_t$ is the current state.
The constraint sets $\calX$ and $\calU$ are convex.
In addition, the cost functions $c_k$ and $c_N$ are convex, proper, and closed.
The MPC problem is parameterized by $x_t$.

Let us define
$z_k = [ x_{k|t}^\top, u_{k|t}^\top ]^\top$, for $k=0,\dots, N-1$,  and $z_N = [ x_{N|t}^\top, 0^\top ]^\top$. 
Then, the problem \eqref{eq:MPC} is reformulated as
\begin{subequations}
\label{eq:MPC distributed}
\begin{optimization}[\minimize_{z_0, \dots, z_N}]%
{\sum_{k=0}^N (f_k(z_k) + I_{\mathcal{Z}_k} (z_k))\label{eq:MPC_z}}
&M_1 z_0 = x_t,\\
&M_1 z_{k+1} - M_2 z_k = 0,\;k=0, \dots, N-1\text,    
\end{optimization}
\end{subequations}
where $f_N(z_N) = c_N(x_{N|t})$, $\calZ_N = \calX \times \{0\}$,
$f_k(z_k) = c_k(x_{k|t},u_{k|t})$ and ${\calZ}_k = {\calX} \times {\calU}$ for $k=0,\dots,N-1$,
$M_1 = [I_{n_x}, 0_{n_x \times n_u}]$, and $M_2 = [A, B]$.

By defining auxiliary variables $w_k \in \bbR^{n_x+n_u}$ that are copies of $z_k$, 
the problem~\eqref{eq:MPC distributed} can be rewritten in consensus form as
\begin{subequations}
\label{eq:MPC_consen}
\begin{optimization}{J(z, w) = \sum_{k=0}^{N} f_k(z_k) + I_{\mathcal{Z}} (w)\label{eq:MPC_z_w}}
&w - z = 0\text,
\end{optimization}
\end{subequations}
where $z = [z_k]_{k=0,1, \dots, N}$,  $w = [w_k]_{k=0,1, \dots, N}$, and
$\mathcal{Z} = \{w \in \calZ_0 \times \dots \times \calZ_N \,|\, M_1 w_0 = x_t,\; M_1 w_{k+1} - M_2 w_k = 0 \; \text{for $k = 0,\dots,N-1$}\}$.

Applying MEL-ADMM in Section~\ref{sec:MEL-ADMM} to \eqref{eq:MPC_consen} yields
\begin{subequations}
\label{eq:ADMM-PCA}
\begin{align}
    \hat z_k^{i+1} &= q_k^i - \rho^{-1} \nabla \hat{M}_{\frac{1}{\rho}} f_k (q_k^i), \; k = 0, 1, \cdots, N,
    \label{eq:ADMM-PCA-z}
    \\
    \tilde z^{i+1} &= \gamma \hat z^{i+1} + (1-\gamma) w^i,
    \\
    w^{i+1} &= \Pi_{\calZ} \left(\hat z^{i+1} - \rho^{-1} \alpha^i \right), 
    \label{eq:ADMM-PCA-w}
    \\
    \alpha^{i+1} &= \alpha^i + \rho (w^{i+1} - \tilde z^{i+1}),
    \label{eq:ADMM-PCA-b}
\end{align}
\end{subequations}
where $q_k^i = w_k^i + \rho^{-1}\alpha_k^i$ and $\hat{M}_{\frac{1}{\rho}} f_k$ is the MEL model of $f_k$.
In \eqref{eq:ADMM-PCA}, the $\hat z$-update, $\tilde z$-update, and $\alpha$-update can be carried out in a parallel manner by $k$, and the $w$-update \eqref{eq:ADMM-PCA-w} is a Euclidean projection onto the convex set $\calZ$.
If the stage cost functions $c_k$ are identical, hence the functions $f_k$ and their Moreau envelopes are also identical, for all $k=0,\dots,N-1$, then \eqref{eq:ADMM-PCA-z} is simplified as it requires only two MEL models $\hat{M}_{\frac{1}{\rho}} f_k$ and $\hat{M}_{\frac{1}{\rho}} f_N$.

The MPC problem can also be solved by adopting sMEL-ADMM in Section~\ref{sec:sMEL-ADMM}.
Specifically, define two constraint sets $\calC_\text{bound} = (\calX\times \calU)^N \times (\calX \times \{0\})$ and $\calC_\text{dynamic} = \{z \in \bbR^{(n_x + n_u)(N+1)} \,|\, \bfM z = d_t \}$,
where $\bfM$ is the $(N+1)\times (N+1)$-block matrix
\[ \bfM = \left[ \begin{array}{rrrr} M_1 \\\!\!\!-M_2 &M_1\\ &\ddots &\ddots \\ &&\!\!\!-M_2 &M_1  \\ \end{array} \right], \quad
d_t = [x_t^\top, 0^\top]^\top.\]
Then \eqref{eq:MPC distributed} can be converted into \eqref{eq:general_opti_con_split} with $\calZ = \calC_{\mathrm{bound}}$ and $\calA = \calC_{\mathrm{dynamic}}$.

LEAF enables computationally efficient solving of the MPC problem \eqref{eq:MPC}, especially when the cost functions are convex but non-smooth, as demonstrated in Section~\ref{sec:experiments}.

\section{Numerical Results}
\label{sec:experiments}
This section validates the proposed framework through three numerical examples. 
The first example studies energy management in a microgrid with renewable generation and a battery energy storage system.
The second and third examples consider entropy maximization and minimum volume enclosing ellipsoid problems, respectively.

\subsection{Experimental setups}

The proposed (s)MEL-ADMM methods are designed to obtain high-quality feasible solutions with low computational cost, rather than to drive the objective value arbitrarily close to the exact optimum.
Their accuracy is therefore evaluated through the achieved optimality gap.
Feasibility is preserved because the variables are projected onto the admissible search space, ensuring 
constraint satisfaction.
Consequently, the comparisons below focus on accuracy and computational time.

\subsubsection{Termination condition}
Termination criteria differ across optimization methods, particularly between interior-point methods and splitting-operator methods.
For example, IPOPT \cite{wachter2006implementation} and MadNLP \cite{pacaud2024gpu} use termination tolerances based on 
KKT conditions, 
whereas OSQP \cite{Stellato2018OSQP} and other ADMM implementations use criteria based on primal and dual residuals.
Therefore, for each example, we choose termination criteria that support a fair comparison across methods.
Specifically, we use the \textit{relative optimality gap}:
\begin{align}
    \label{eq:opt_gap}
    g_{opt} = \left| \frac{J - J^\star}{J^\star} \right| \times 100\%,
\end{align}
where $J$ is the objective value of the best feasible solution found by a solver and $J^\star$ is the optimal objective value,
obtained using high-accuracy interior-point method (IPM) configurations.
We then tune the termination tolerances so that all optimizers return feasible solutions within the same prescribed optimality gap.

\subsubsection{Implementation and baselines}
We compare the computational time of (s)MEL-ADMM against several state-of-the-art solvers, including: IPOPT \cite{wachter2006implementation}, MadNLP \cite{pacaud2024gpu}, Mosek \cite{andersen2000mosek}, and Clarabel \cite{goulart2024clarabel}. In addition, we also compare our methods with ADMM \cite{boydDistributedOptimization2011}, implemented as in our methods but without using the MEL model.
Ground-truth optimal values for all experiments are obtained using high-accuracy settings of IPOPT and Mosek.
All implementations are written in Julia and run on an Apple M4 Pro chip with 24 GB of RAM.

\subsection{Example 1: Energy management in a microgrid}
\label{sec:power_system}

\begin{figure}[t]
    \centering
    \includegraphics[width = 0.8\linewidth ]{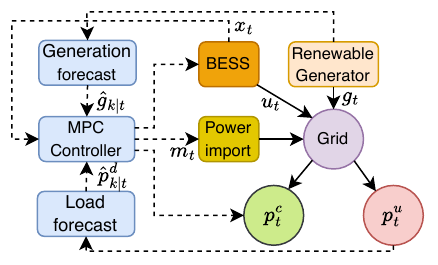}
    \caption{Example 1 of energy management in a microgrid.}
    \label{fig:ex2}
    \vspace{-1em}
\end{figure}

This example considers the microgrid model in Fig.~\ref{fig:ex2} \cite{risbeckEconomicModel2020,cortes-aguirreEconomicMPC2025}, which includes 
a renewable generator, controllable loads $p_t^c$, uncontrollable loads $p_t^u$, a battery energy storage system (BESS), and a utility-grid connection.
It is equipped with photovoltaic (PV) generation, allowing it to operate in standalone mode.
The control objective is to schedule the BESS to minimize the cost of electricity purchased from the grid while maintaining user comfort for the controllable loads.
IPOPT and MadNLP are used as baselines for this example. 

\textit{Electricity cost}:
At a prediction step $k$, the electricity cost includes the energy cost $J_{e,k}$ and the peak demand charge $J_{p,k}$.
The energy cost is formulated as
\begin{align}
    J_{e,k} = r_{ec} \Delta_T \left( m_{k|t} + \frac{1-\eta}{2\sqrt \eta} | u_{k|t}| \right),
\end{align}
where $r_{ec}$, $\Delta_T$, and $\eta$ are respectively the energy charge rate, time-step length, and round-trip efficiency factor of the BESS.
In addition, $m_{k|t}$ and $u_{k|t}$ denote the predicted imported power and the predicted charging/discharging power of the BESS at $k$ steps ahead of the current time $t$.
The BESS discharges when $u_{k|t} > 0$ and charges when $u_{k|t} < 0$, according to
\begin{align}
    \label{eq:bat_dyn}
    x_{k+1|t} = x_{k|t}  - \frac{u_{k|t}}{b_{ess}} \Delta_T,
\end{align}
where $b_{ess} > 0$ is the BESS energy capacity (kWh) and $x_{k|t}$ is the predicted state of charge.
Let $r_{op}$ be the on-peak demand charge rate.
The peak demand charge $J_{p,k}$ is given by
\begin{align}
    J_{p,k} =  r_{op} \max \left( m_{k|t}, 0 \right)\text.
\end{align}

\textit{Discomfort cost}:
At $k$ steps ahead of the current time $t$, the discomfort cost associated with the controllable load $p_{k|t}^c$ is given by $J_{d,k}  =  r_{df}f(p_{k|t}^c)$, where
\begin{equation*}
f(p) = \begin{cases}
+\infty, &p \leq 0,
\\
\frac{a}{p} - 1,\ & 0 < p < a\text,\\
0, \ &\text{otherwise}
\end{cases}
\end{equation*}
measures the customer discomfort level and $r_{df}$ is the discomfort cost weight.
Here, $a$ is the preferred power consumption of the controllable load. 
If 
$p_{k|t}^c$ is lower than $a$, the discomfort level is high, and the discomfort decreases as the supplied power approaches the preferred level.

For an $N$-step horizon, the objective is to minimize the total electricity and discomfort cost, 
\begin{align}
    J \!=\! \sum_{k=1}^{N} (J_{e,k} + J_{p,k} + J_{d,k}).
    \label{eq:cost_function_power}
\end{align}
Let $\hat g_{k|t}$ and $\hat p_{k|t}^u$ denote the forecasted renewable generation and forecasted uncontrollable demand at prediction step $k$.
The power balance is then given by
\begin{align}
     p_{k|t}^c + \hat p_{k|t}^u = m_{k|t} + u_{k|t} + \hat g_{k|t}.
    \label{eq:power_flow}
\end{align}
The renewable generation and demand are forecasted over the prediction horizon at each time step $t$.
The receding horizon planning problem is thus formulated as:
\begin{subequations}
\label{eq:MPC-ex2}
\begin{optimization}{J(\{p_{k|t}^c, u_{k|t}, m_{k|t}\}_{k=1,\dots,N})\label{eq:MPC-ex2-cost}}
    &\eqref{eq:bat_dyn}, \eqref{eq:power_flow}, \\
    & u_{\max} \geq u_{k|t} \geq u_{ \min}, \label{eq:MPC-ex2-bounded-cstr}\\
    & x_{\max} \geq x_{k|t} \geq x_{\min},\;x_{N|t} \geq \underline{x}_N\text.
    \label{eq:MPC-ex2-max-cstr}
\end{optimization}
\end{subequations}
Because our methods 
can 
handle the nonsmooth cost \eqref{eq:MPC-ex2-cost}, we use \eqref{eq:MPC-ex2} to collect data for training the MEL model described in Section~\ref{sec:MEL-ADMM}.
To apply IPOPT and MadNLP, however, we transform \eqref{eq:MPC-ex2} into a smooth problem by introducing auxiliary variables $s_{u,k}$, $s_{m,k}$ and $s_{d,k}$ then reformulating \eqref{eq:MPC-ex2} as
\begin{subequations}
\label{eq:MPC-relax}
\begin{optimization}{ \sum_{k=1}^{N} \! r_{ec} \Delta_T \!\left( m_{k|t} \!+\! \frac{1\!-\!\eta}{2\sqrt \eta} s_{u,k} \right) 
     \!+\! r_{op} s_{m,k} \!+\!  r_{df} s_{d,k}\label{eq:MPC-relax-cost}}
    &\eqref{eq:bat_dyn},\eqref{eq:power_flow}, \eqref{eq:MPC-ex2-bounded-cstr} \text{~and~} \eqref{eq:MPC-ex2-max-cstr}, 
    \\
    &s_{u,k} \geq u_{k|t},\;s_{u,k} \geq - u_{k|t},
    \\
    & s_{m,k} \geq m_{k|t},\;s_{m,k} \geq 0,
    \\
    & s_{d,k} \geq \frac{a}{p_{k|t}^c} - 1,\;s_{d,k} \geq 0.
    \label{eq:MPC-relax-cstr}
\end{optimization}
\end{subequations}
By the definition of the discomfort function, the controllable load $p_{k|t}^c$ cannot be negative, otherwise the objective will become $+\infty$.
Therefore, \eqref{eq:MPC-ex2} is equivalent to 
\eqref{eq:MPC-relax}, which has a linear objective function by moving the nonlinear terms 
introduced by the discomfort model to the constraints.

\begin{table*}[t]
\centering
\caption{Computation time (in ms) benchmark for solving the optimization problem in  
\ref{sec:power_system}.}
\label{tab:power_runtime}
\begin{tabular}{c|c|c|c|c|c}
    &IPOPT & MadNLP  &ADMM &MEL-ADMM &sMEL-ADMM
    \\
    \hline
    N = 96, $g_{opt} = 1\%$ &11.5 ±  0.4 &15.3 ±  0.2  &80.1 ± 1.4   &6.3 ± 0.1  &\textbf{0.5 ± 0.1}
    \\
    \hline 
    N = 96, $g_{opt} = 0.01\%$ &13.3 ±  0.3 &18.5 ±  0.2 &111.4 ± 1.8 &8.9 ± 0.1 
    &\textbf{0.9 ± 0.1}
    \\
    \hline
    N = 192, $g_{opt} = 1\%$ &24.3 ± 0.1 &35.3 ±  0.3 &159.5 ± 2.1 &11.7 ± 0.1 &\textbf{0.7 ± 0.1}
    \\
    \hline
    N = 192, $g_{opt} = 0.01\%$ &26.1 ± 0.2 &37.6 ± 0.3 &215.8 ± 2.3 &17.6 ± 0.2 &\textbf{1.3 ± 0.2}
    \\
    \hline
\end{tabular}
\end{table*}

For this setup, we use the following parameters (partly taken from \cite{cortes-aguirreEconomicMPC2025}):
$\Delta_T = \qty{0.25}{\hour}$, 
$N = 96$ (corresponding to 24 hours), 
$r_{ec} = \$0.1/\unit{\kWh}$,
$r_{op} = \$19.19/\unit{\kW}$,
$r_{df} = \$10$,
initial SOC $x_0 = 0.5$ (50\%),
$u_{\max} = \qty{700}{\kW}$,
$u_{\min} = \qty{-700}{\kW}$,
$b_{ess} =  \qty{500}{\kWh}$,
$\eta = 0.8$,
$x_{\min} = 0.2$ (20\%),
$x_{\max} = 0.8$ (80\%),
$\underline{x}_N = 0.5$ (50\%),   
and $a = \qty{50}{\kW}$.
Perfect forecasts are assumed for power demand and PV generation.
Using IPOPT with $\epsilon_{tol} = 10^{-8}$, the optimal objective values for $N=96$ (one day) and $N = 192$ (two days)
are $J_{N=96}^\star = 36479.1$ and $J_{N=192}^\star = 73117.5$, respectively.

Table~\ref{tab:power_runtime} shows that both methods of our framework 
consistently outperform the classical IPM baselines.
For the one-day horizon case $(N=96)$, MEL-ADMM attains approximately a $2\times$ speedup over the IPM baselines within optimality gap $g_{opt}=1\%$ and maintains the advantage as the tolerance tightens to $g_{opt}=0.01\%$;  sMEL-ADMM further reduces the solving time by more than  ten times while preserving feasibility.
Doubling the horizon to $N=192$ amplifies these trends: the runtimes of IPOPT, MadNLP, and conventional ADMM roughly double, whereas MEL-ADMM scales sublinearly and sMEL-ADMM remains near one millisecond on average, underscoring the benefit of Moreau envelope learning in capturing the structure induced by the battery and comfort constraints.
The small standard deviations confirm that the proposed algorithms deliver stable execution times even as the optimality requirements change.

\begin{figure}[!tb]
    \includegraphics[width=0.9\linewidth]{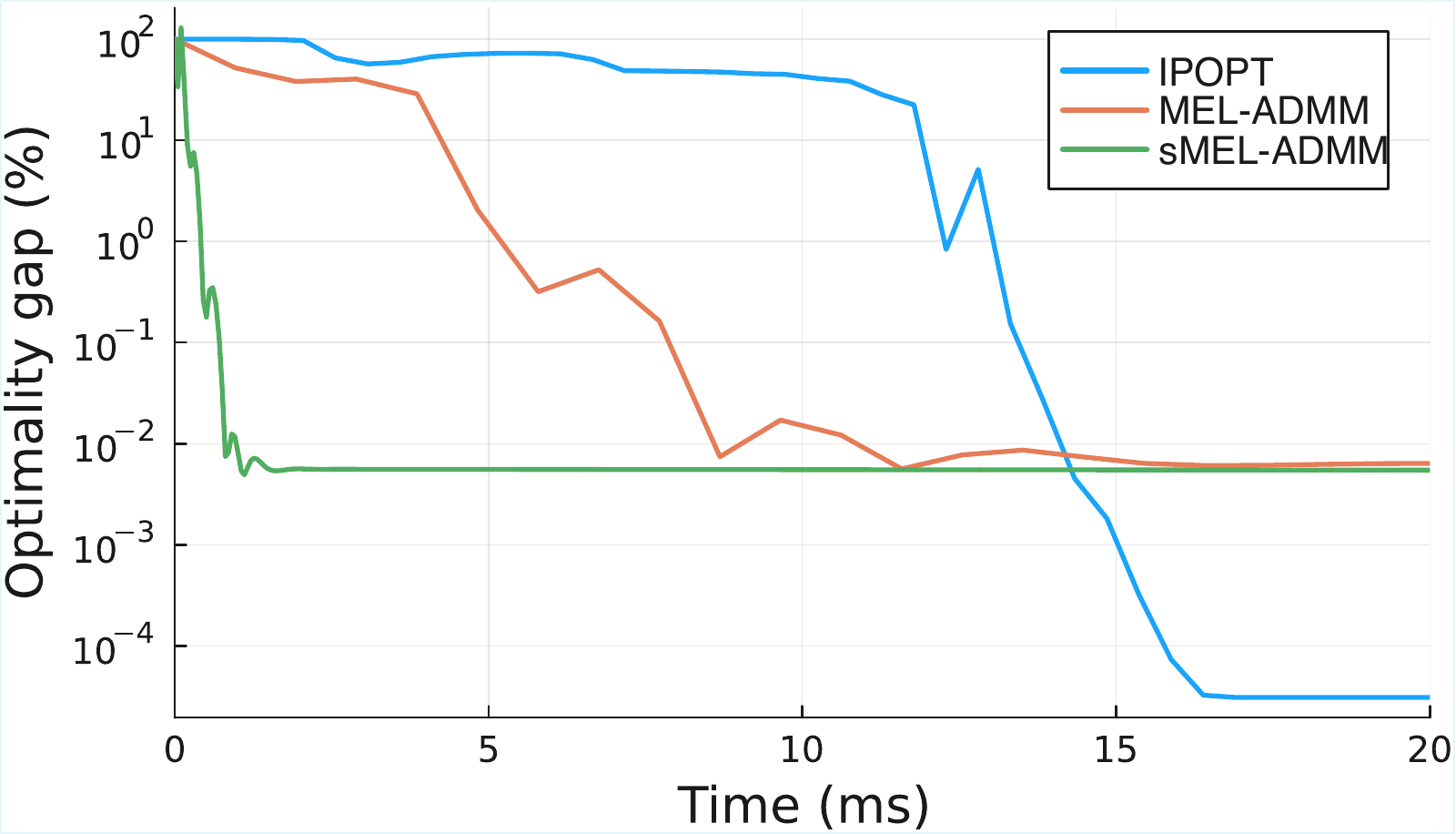}
    \caption{
    Optimality gap over solving time for Example 1. 
    }
    \label{fig:optgap_time}
    \vspace{-1em}
\end{figure}

Figure~\ref{fig:optgap_time} compares the solvers in terms of accuracy versus time and shows that MEL-ADMM and sMEL-ADMM reach the target gap much more rapidly than IPOPT.
For high and low optimality gaps ($g_{opt} \leq 1\%$ and $g_{opt} \leq 0.01\%$), sMEL-ADMM reaches the desired accuracy almost immediately, while IPOPT requirs a longer transient before the optimality gap becomes sufficiently small.
This behavior demonstrates the ability of sMEL-ADMM to obtain feasible suboptimal solutions rapidly within acceptable accuracy requirements.
While MEL-ADMM and sMEL-ADMM plateau once the gap target drops below $10^{-3}\%$,  IPOPT continues converging and reliably attains gaps under $10^{-4}\%$.

\begin{figure}[!tb]
    \includegraphics[width=\linewidth]{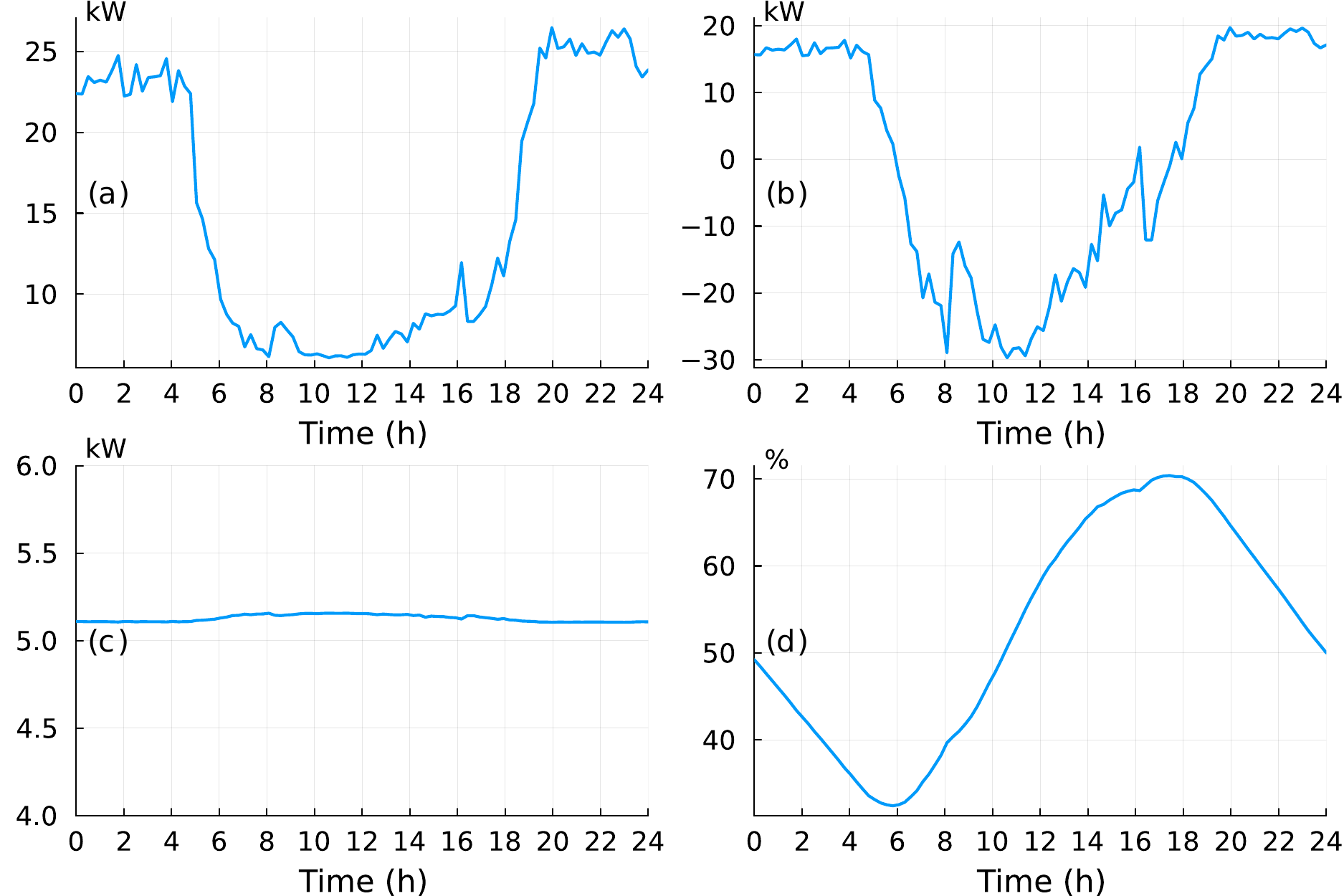}
    \caption{Example 1: Prediction over 24-hour (96-step) horizon of (a) imported power $m_{k|t}$, 
    (b) discharging power $u_{k|t}$, (c) controllable load $p_{k|t}^c$, and (d) SOC of BESS.}
    \label{fig:bess_plot}
    \vspace{-1em}
\end{figure}

The trajectories in Fig.~\ref{fig:bess_plot} further validate that the solution obtained by sMEL-ADMM satisfies the BESS operating requirements.
The algorithm keeps the BESS SOC within $[20\%,80\%]$ and enforces power balance by co-adjusting $m_{k|t}$ and $p_{k|t}^c$.
During periods of high PV injections (daytime hours 8–14), 
the algorithm reduces imports and charges the battery aggressively, 
while the discomfort cost stays bounded.
In the evening peak, the algorithm reverses power flow, 
discharging the battery to shave the grid purchases without 
violating the terminal constraint on $x_{N|t}$.
\subsection{Example 2: Entropy maximization}
\label{sec:max_entropy}

This example considers the constrained entropy maximization problem in \cite[Eq.~(5.13)]{boyd2004convex}, expressed as
\begin{subequations} \label{eq:maxEntropy}
\begin{optimization}[\maximize_{x > 0}]%
{-\sum_{i=1}^{n} x_i \log x_i}
    & {\bf 1}^\top x = 1,
    \\
    &Ax \leq b, \label{eq:Axb}
\end{optimization}
\end{subequations}
where $x = [x_1, x_2,\dots, x_n] \in \mathbb{R}^n$ is the decision variable representing a probability distribution, and
$A \in \mathbb{R}^{m \times n}$ and $b \in \mathbb{R}^m$ define the linear inequality constraints. 
Here, $A$ and $b$ are treated as problem parameters.
To apply sMEL-ADMM (Algorithm~\ref{alg:ADMM-MEL-split}) to this problem, we reformulate \eqref{eq:Axb} as $Ax = s$ and $s \leq b$ with auxiliary variable $s \in \mathbb{R}^m$.

This example evaluates the validity, accuracy, and computational performance of sMEL-ADMM under randomized problem parameters.
Optimal objective values are obtained from high-accuracy IPOPT solves.
The proposed sMEL-ADMM is compared with IPOPT and two modern conic or nonlinear solvers, MadNLP and Clarabel, which constitute the baseline solvers.
All elements of $A$ are sampled uniformly from $[0,1]$, and $b$ is randomly selected so that \eqref{eq:Axb} is feasible.
We solve \eqref{eq:maxEntropy} with different dimensions $n=10^2, 10^3, 10^4$ and $m = 1, 10, 10^2$, different accuracy targets $g_{opt}\leq 1\%, 0.1\%, 0.01\%$, and 10,000 different $(A,b)$ samples.
The other sMEL-ADMM parameters are $\gamma = 1.6$ and $\rho = 1$ for all experiments.

Fig.~\ref{fig:maxE-n=1000} shows the distribution of solving times of the solvers 
with $n=1000$ and $m=100$ over $1000$ randomly generated instances under two optimality tolerances. 
In both accuracy regimes, sMEL-ADMM exhibits a substantially smaller median solving time than IPOPT, MadNLP, and Clarabel, indicating faster and more stable performance. 
While IPOPT and MadNLP show comparable median runtimes with relatively wide variability, Clarabel is consistently faster than these two but remains significantly slower than sMEL-ADMM. 
As the optimality requirement becomes stricter ($g_{\mathrm{opt}}\le 0.01\%$), the computational cost of all solvers increases;
however, sMEL-ADMM maintains highly competitive solving times compared to the baseline methods, highlighting the scalability and robustness of sMEL-ADMM for high-accuracy solutions.

Table~\ref{tab:IPOPT} reports the solving times of IPOPT and sMEL-ADMM over 1000 randomized parameter pairs $(A,b)$ for three optimality targets. 
The results show that sMEL-ADMM yields faster solving times than IPOPT at low ($g_{opt}\leq 1\%$), medium ($g_{opt}\leq 0.1\%$), and high ($g_{opt}\leq 0.01\%$) accuracy levels.
For instance, for $(n=10^4, m=100)$, sMEL-ADMM is over ten times faster than IPOPT across all three accuracy targets.

\begin{figure}[!tb]
\subfloat[ $g_{opt} \leq 0.1\%$]{\includegraphics[width = 0.48\linewidth]{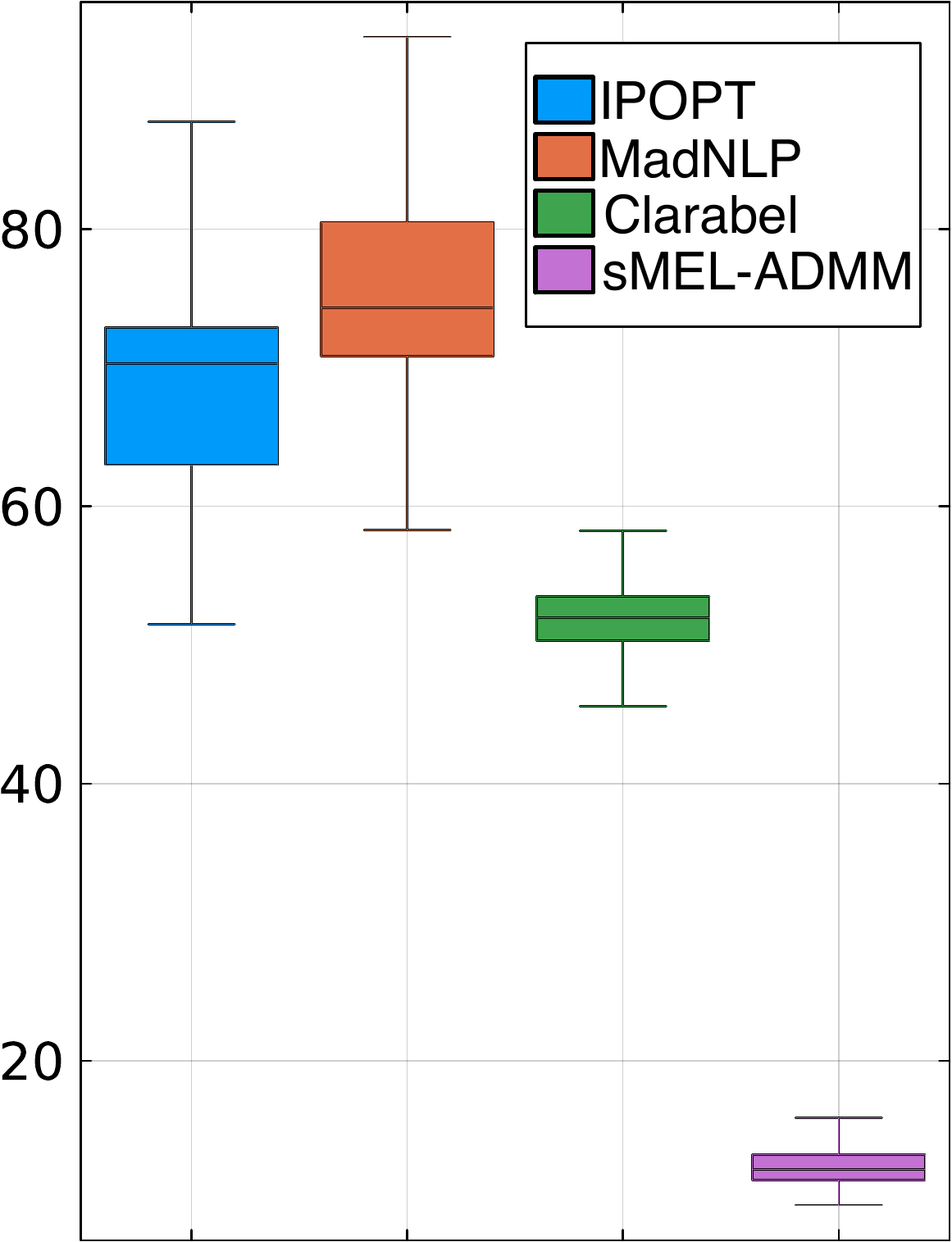}}
\hfill
\subfloat[ $g_{opt} \leq 0.01\%$]{\includegraphics[width = 0.48\linewidth]{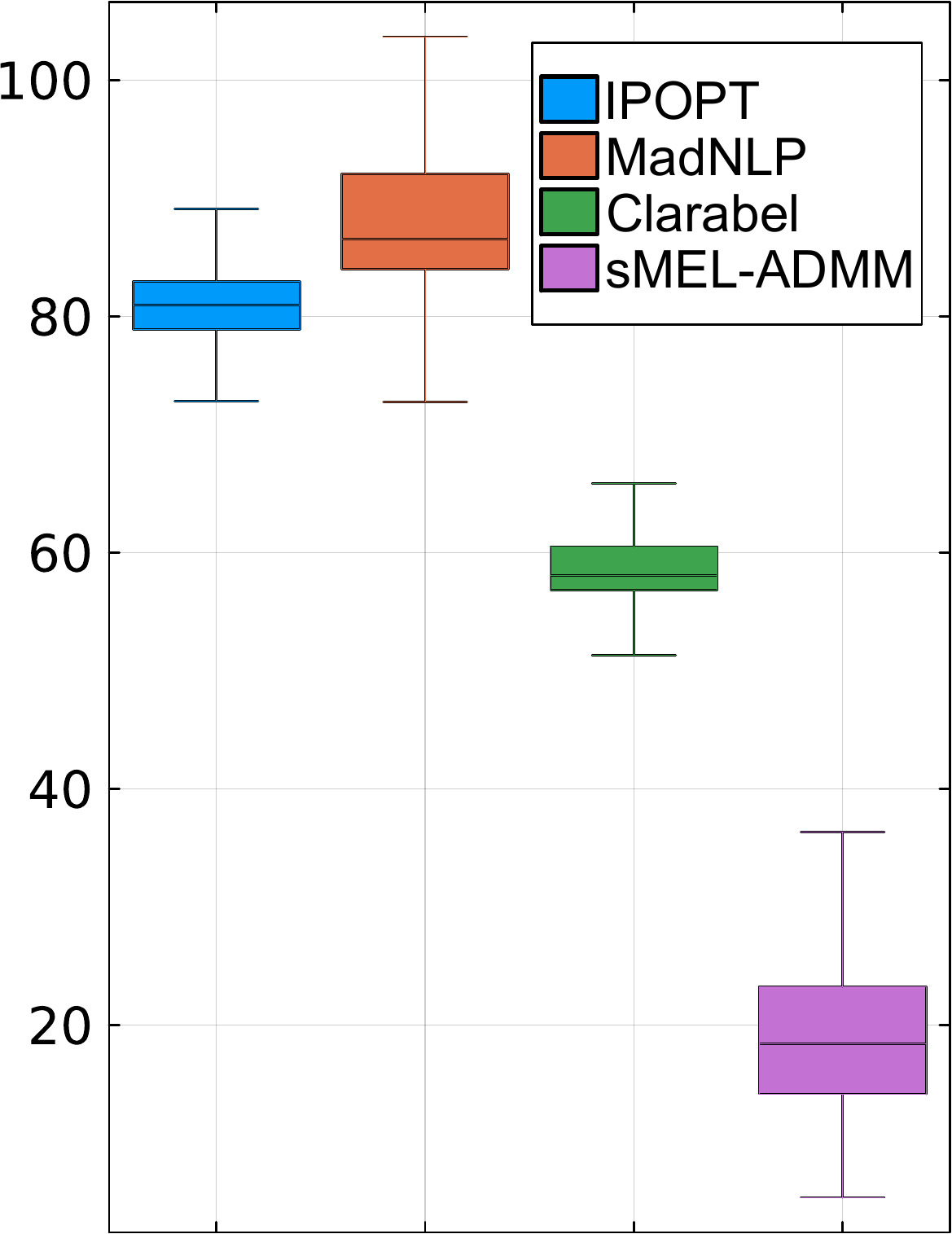}}
\caption{Solving time (ms) for the entropy maximization problem ($n=1000, m=100$) using IPOPT, MadNLP, Clarabel, and sMEL-ADMM over 1000 randomized parameter instances and two optimality gaps.}
\label{fig:maxE-n=1000}
\vspace{-1em}
\end{figure}

\begin{table}[!tb]
    \centering
    \caption{Average solving time (ms) of IPOPT and sMEL-ADMM for Example 2 (entropy maximization) over 1000 randomized parameter instances.}
    \label{tab:IPOPT}
    \begin{tabular}{c|c|r||r|r}
         $g_{opt}$ &$n$ &$m$ &IPOPT &sMEL-ADMM
        \\ \hline
        \multirow{6}{*}{$\leq 1\% $} &\multirow{2}{*}{$10^2$}  & 1    &0.88  &{\bf 0.12}
        \\ 
         & & $10$ &1.12 &{\bf 0.14}
        \\ \cline{2-5}
         &\multirow{2}{*}{$10^3$} & $10$   &6.69 &{\bf 0.45}
        \\ 
        & &100 &27.47 &{\bf 5.71}
        \\ \cline{2-5}
        &\multirow{2}{*}{$10^4$} &10 &54.23   &{\bf 6.59}
        \\
        & &100 &595.01 &{\bf 55.31}
        \\ \hline
        \multirow{6}{*}{$\leq 0.1\% $} &\multirow{2}{*}{$10^2$}  & 1    &0.93 &{\bf 0.13}
        \\ 
         & & $10$ &1.39 &{\bf 0.22}
        \\ \cline{2-5}
         &\multirow{2}{*}{$10^3$} & $10$   &8.36 &{\bf 0.49}
        \\
        & &100 &62.82 &{\bf 10.46}
        \\ \cline{2-5}
        &\multirow{2}{*}{$10^4$} &10 &54.27   &{\bf 6.59}
        \\
        & &100 &1395.38 &{\bf 88.63}
        \\
        \hline
        \multirow{6}{*}{$\leq 0.01\% $}  &\multirow{2}{*}{$10^2$}  &1   &1.09  & {\bf 0.19}
        \\ 
        & &10 &1.75 &{\bf 0.58}
        \\ \cline{2-5}
        &\multirow{2}{*}{$10^3$} &10 &9.33   &{\bf 0.51}
        \\ 
        & &100  &73.12 &{\bf 14.34}
        \\ \cline{2-5}
        &\multirow{2}{*}{$10^4$} &10 &57.1   &{\bf 15.9}
        \\ 
        &&100 &1597.93 &{\bf 147.64}
        \\\hline
    \end{tabular}
\end{table}

\subsection{Example 3: Minimum volume enclosing ellipsoid (MVEE)}
\label{sec:experiments:mvee}

This example considers the MVEE problem 
\cite{bowman2023computing}
\begin{subequations}
\label{eq:SDP}
\begin{optimization}[\minimize_{P}]%
{\det (P^{-1})\label{eq:SDP-cost}}
    & (x_i - c)^\top P (x_i - c) \leq 1,\;i = 1, \ldots, m,
    \label{eq:SDP-cstr1}
    \\
    & P \succ 0,
\end{optimization}
\end{subequations}
where 
$x_i \in \bbR^n$ is the $i$-th point,
$P = P^\top \in \bbR^{n\times n}$ is a positive definite matrix that defines the shape of the ellipsoid,
and $c \in \bbR^n$ is the center of the ellipsoid.
By introducing auxiliary variables $s_i \in \bbR$, \eqref{eq:SDP} can be rewritten as
\begin{subequations}
\label{eq:SDP-slack}
\begin{optimization}[\minimize_{P}]%
{\det(P^{-1})}
    &H_i^\top \operatorname{vec}(P) = s_i,\;i = 1, \ldots, m,
    \\
    &P \succ 0,\quad s_i \leq 1\text,
\end{optimization}
\end{subequations}
where $\operatorname{vec}(P)$ concatenates the diagonal and upper-triangular elements of the symmetric matrix $P$ into a vector.
In \eqref{eq:SDP-slack}, $H_i = [v_{i,1}^2, 2v_{i,1} v_{i,2}, v_{i,2}^2, 2v_{i,1} v_{i,3}, 2v_{i,2} v_{i,3}, v_{i,3}^2, \dots $ $v_{i,n}^2]^\top \in \bbR^{\frac{n(n+1)}{2}}$
with $v_i = [v_{i,j}]_{1\leq j\leq n} = x_i - c$.
With this setup, sMEL-ADMM can be applied to solve \eqref{eq:SDP-slack}.
To simplify the experiment without loss of generalization, we set $n = 3$, $c = 0$, and sample each $x_i$ uniformly from $[-1, 1]$.

Two leading semidefinite programming solvers, Mosek and Clarabel, are selected as baselines for this example.
Table~\ref{tab:mvee_runtime} reports the computation time of the proposed sMEL-ADMM method and the baseline solvers over 1000 randomly generated problem instances.
Across all scenarios, sMEL-ADMM consistently outperforms the baseline solvers.
Specifically, at $m=55$ with $g_{opt}=1\%$, sMEL-ADMM is approximately 7.5 times faster than Mosek and 4 times faster than Clarabel.
Even when the optimality gap is tightened to $0.1\%$, sMEL-ADMM maintains its advantage, solving \eqref{eq:SDP-slack} about 5 times faster than Mosek and nearly 3 times faster than Clarabel.
As the number of constraints increases, all solvers exhibit higher solving times; however, sMEL-ADMM preserves its performance advantage.
These results emphasize the efficiency and scalability of our learning-enabled framework.

Fig.~\ref{fig:MVEEn=3} provides a statistical breakdown of the solving time 
with $m = 55$ under optimality gaps of $1\%$ and $0.1\%$.
While Table~\ref{tab:mvee_runtime} summarizes average performance, these box plots reveal the consistency of sMEL-ADMM.
The interquartile range of sMEL-ADMM is significantly narrower and lower than those of Mosek and Clarabel.
Even in the more stringent optimality gap scenario, sMEL-ADMM maintains a much tighter distribution compared to the baseline solvers, whose solving times show wider fluctuations. 
These results demonstrate that the proposed approach achieves stable, high-speed computation under randomized parameter conditions.

\begin{table}[!tb]
\centering
\caption{Average solving time (ms) of Mosek, Clarabel, and sMEL-ADMM for Example 3 (MVEE) over 1000 randomized parameter instances.}
\label{tab:mvee_runtime}
\begin{tabular}{c|c|c|c}
    &Mosek & Clarabel &sMEL-ADMM
    \\
    \hline
    m = 55, $g_{opt} = 1\%$ &1.36  &0.79  &\textbf{0.18}
    \\
    \hline 
    m = 55, $g_{opt} = 0.1\%$ &1.44  &0.84  &\textbf{0.30}
    \\
    \hline
    m = 75, $g_{opt} = 1\%$ &1.66  &0.89  &\textbf{0.23}
    \\
    \hline 
    m = 75, $g_{opt} = 0.1\%$ &1.67  &0.92  &\textbf{0.38}
    \\
    \hline
\end{tabular}
\end{table}

\begin{figure}[!tb]
\subfloat[Optimality gap $\leq 1\%$]{\includegraphics[width = 0.48\linewidth]{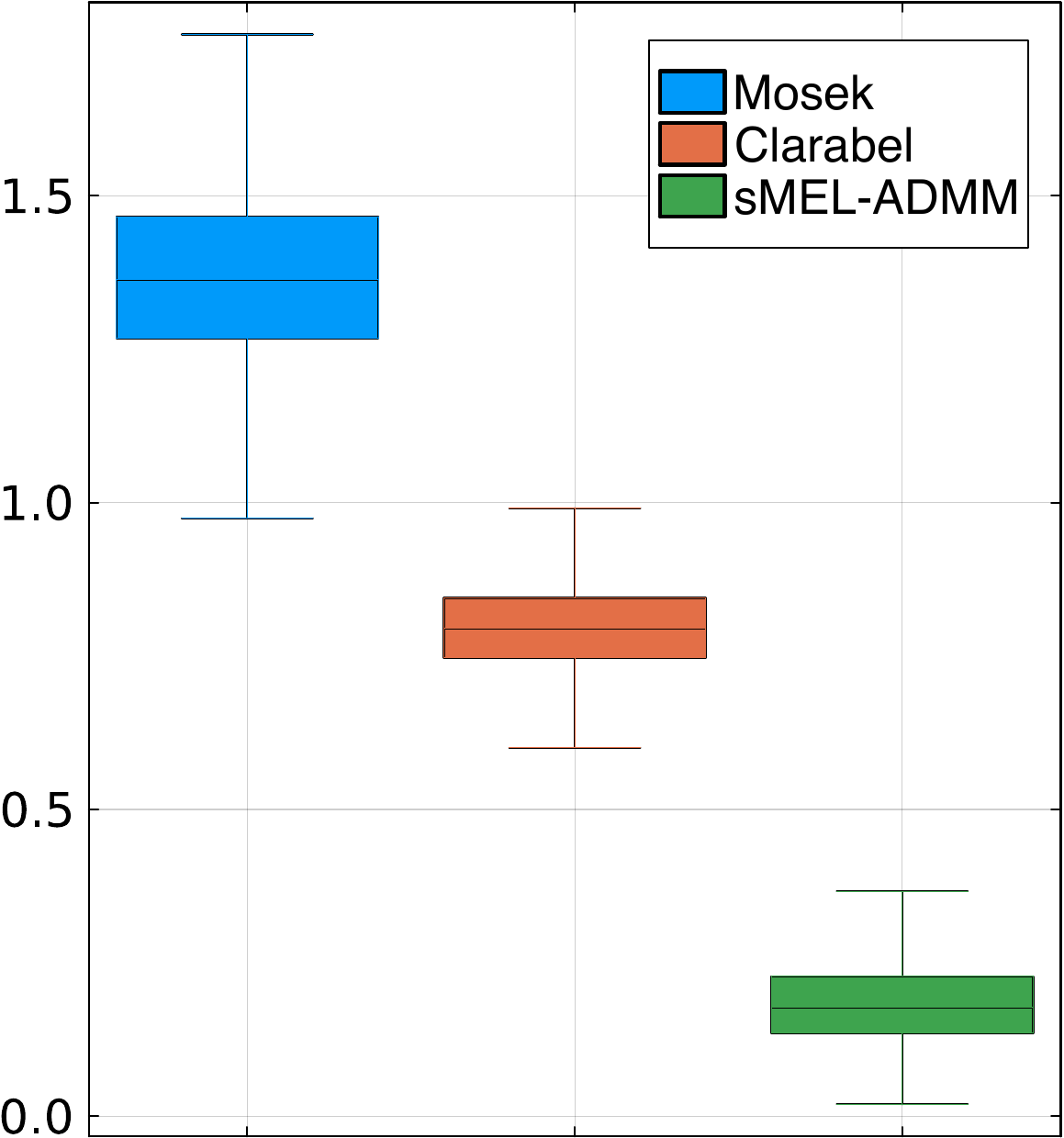}}
\hfill
\subfloat[Optimality gap $\leq 0.1\%$]{\includegraphics[width = 0.48\linewidth]{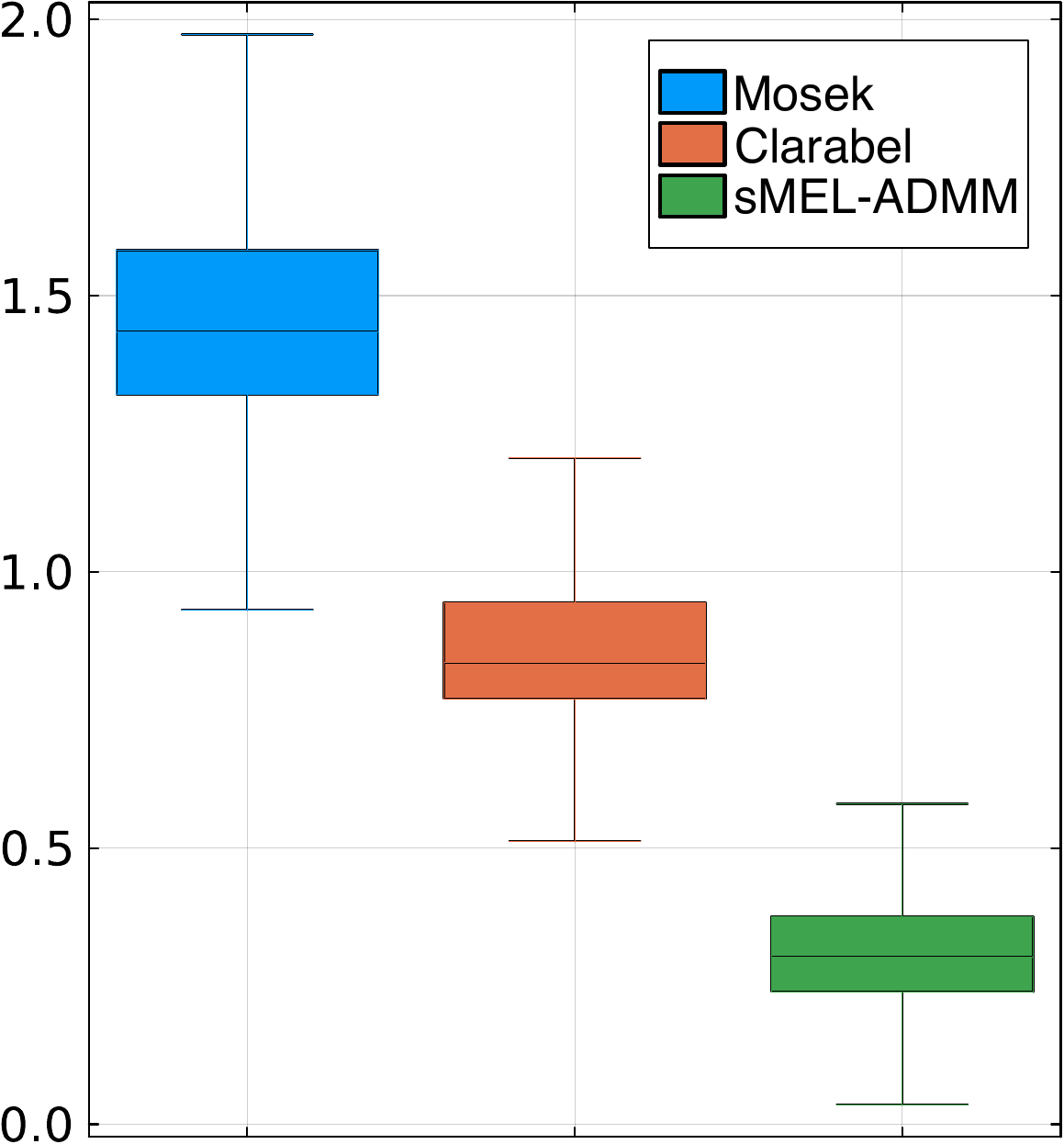}}
\caption{Solving time for the MVEE problem with $m=55$ 
over 1000 randomized parameter instances.}
\label{fig:MVEEn=3}
\end{figure}

\section{Conclusion}
\label{sec:conclusion}

This paper presented LEAF, a learning-enabled ADMM framework for accelerated convex optimization through learned approximations of the Moreau envelope (ME).
By integrating an ICNN-based ME model into ADMM iterations, the proposed framework preserves key structural properties of convex optimization, including convexity, smoothness, feasibility, and convergence guarantees.
Unlike existing learning-to-optimize approaches that directly approximate high-dimensional operators, LEAF learns a scalar-valued ME function, substantially reducing model complexity while improving scalability and data efficiency.

Based on LEAF, we developed MEL-ADMM and sMEL-ADMM with established theoretical guarantees.
Numerical experiments on large-scale optimization problems demonstrated that the proposed methods achieve significant computational speedups over state-of-the-art solvers while maintaining low optimality gaps and strict constraint satisfaction.
The results further show that exploiting optimization structure within the learning architecture can substantially improve both efficiency and reliability in learning-enabled optimization.

Future work will investigate extensions to broader problem classes, including mixed-integer and distributed convex optimization.
We also plan to explore applications in robotics, autonomous systems, and real-time control, where fast and reliable optimization is critical.

\bibliographystyle{IEEEtran}
\bibliography{references}

\appendices

\section{Proof of Lemma \ref{lem:LipschitzNN} 
\label{appx:lipschitz}}

We will show that $\nabla_x^2 h(x)$ is uniformly bounded.
We first derive the expressions for $\nabla_x h(x)$ and $\nabla_x^2 h(x)$.
Let
\(D_j = \operatorname{diag}\big(\phi_j'(y_j)\big)\) and 
\(
E_j = \operatorname{diag}\big(\phi_j''(y_j)\big)
\).
The Jacobian $J_j = \frac{\partial v_j}{\partial x}$ is given by the chain rule as follows
\begin{align*}
J_0 &= D_0 W_0,
\\
J_j &= D_j\big(W_j J_{j-1} + V_j\big), \; j=1,\dots,L-1,
\\
G_L &= \nabla_x y_L = W_L J_{L-1} + V_L,
\\
\nabla_x h &= D_L G_L^{\top} = \phi_L'(y_L) G_L^\top.
\end{align*}
Using the chain rule again, the second derivative (Hessian) for the output layer is derived as follows
\begin{align*}
\nabla_x^2 h
&=
\phi_L''(y_L) \nabla_x y_L  \nabla_x y_L^\top
+
\phi_L'(y_L) \nabla_x^2 y_L \nonumber
\\
&=  \phi_L''(y_L)\, G_L^{\top} G_L
\!+\!
\phi_L'(y_L)
\sum_{k=1}^{n_{L-1}} (W_L)_k \, \nabla_x^2 (v_{L-1})_k,
\end{align*}
where $(\cdot)_k$ denotes the $k$-th row (element) of a matrix (vector), and $n_j$ is the number of neurons at the $j$-th layer.
%
For the $i$-th neuron in layer $j$, we define $g_{j,i}^{\top} = \big(W_j J_{j-1} + V_j\big)_i$, then
\[
\nabla_x^2 (v_j)_i
=
\phi_j''\big((y_j)_i\big)\, g_{j,i} g_{j,i}^{\top}
+
\phi_j'\big((y_j)_i\big)\, \nabla_x^2 (y_j)_i ,
\]
with 
$\nabla_x^2 (y_j)_i
=
\sum_{k=1}^{n_{j-1}} ((W_j)_i)_k \, \nabla_x^2 (v_{j-1})_k$.

In the input layer $j=0$, we have
$g_{0,i}^{\top} = (W_0)_i,~
\nabla_x^2 (v_0)_i
=
\phi_0''\big((z_0)_i\big)\,
(W_0)_i^{\top}(W_0)_i$.
Because all weights are finite and $\phi_\ell'$ and $\phi_\ell''$ are bounded,
each term in the summation remains bounded.
For instance, 
$\Vert \nabla_x^2 (v_0)_i \Vert \leq H_0$ implies that
\begin{align*}
\Vert \partial_y^2 (v_1)_i \Vert \leq &\Vert \phi_1''((y_1)_i) \Vert \Vert g_{1,i} g_{1,i}^\top \Vert  
\!+\! \Vert \phi_1'((y_1)_i)\Vert H_0  \Vert W_1 \Vert_\infty.
\end{align*}
Finally, by applying the mean-value theorem, it follows that
\[
\|\nabla_x h(x_1) -\nabla_x h(x_2)\|_2
\le \sup_x \|\nabla_x^2 h(x)\|\,\|x_1-x_2\|_2,
\]
which proves that $\nabla_x h$ is globally Lipschitz-continuous.

\section{Proof of Lemma \ref{thm:convergence1} 
\label{appx}}

We first visit some useful definitions and results.
\begin{definition}[Convex conjugate]
The convex conjugate of a function $f$
is defined as $f^\star(y) = \sup_{x \in \bbR^n} \{ x^\top y - f(x)\}$.
\end{definition}

\begin{lemma}[Fenchel biconjugate \cite{rockafellar1998variational}] 
\label{lm:biconjugate}
The Fenchel biconjugate of $f$ is the convex conjugate of $f^\star$, i.e., $f^{\star\star}(x) = \sup_{y \in \bbR^n} \{ y^\top x - f^\star(y)\}$.
If $f \in \Gamma_0(\bbR^n)$ then $f^{\star \star} = f$.
\end{lemma}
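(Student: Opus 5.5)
We argue via the Fenchel--Moreau route: show that $f^{\star\star}$ is exactly the pointwise supremum of the affine minorants of $f$, and that a function in $\Gamma_0(\bbR^n)$ equals this supremum. The inequality $f^{\star\star}\le f$ is immediate. By definition of $f^\star$, for all $x,y$ we have the Fenchel--Young inequality $f^\star(y)\ge y^\top x - f(x)$, i.e.\ $y^\top x - f^\star(y)\le f(x)$. Taking the supremum over $y$ gives $f^{\star\star}(x)\le f(x)$. Moreover, for any affine function $a(x)=y^\top x-c$ with $a\le f$, we have $c\ge \sup_x\{y^\top x-f(x)\}=f^\star(y)$, so $a\le y^\top(\cdot)-f^\star(y)\le f^{\star\star}$. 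Hence $f^{\star\star}$ dominates every affine minorant of $f$, and it suffices to show that
\[
f(x_0)=\sup\{a(x_0) : a \text{ affine},\ a\le f\}
\quad\text{for every } x_0 .
\]

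For this we use the epigraph $\operatorname{epi} f=\{(x,t): t\ge f(x)\}$. It is nonempty (since $f$ is proper), convex (since $f$ is convex), and closed (since $f$ is l.s.c.). Fix $x_0$ and any $t_0<f(x_0)$. Then $(x_0,t_0)\notin\operatorname{epi} f$, so by strict separation of a point from a closed convex set there exist $(y,s)\neq 0$ and $\epsilon>0$ with
\[
y^\top x + s\,t \le y^\top x_0 + s\,t_0 - \epsilon \quad\text{for all } (x,t)\in\operatorname{epi} f .
\]
Letting $t\to+\infty$ shows $s\le 0$. If $s<0$, dividing by $-s$ yields an affine minorant $a$ of $f$ with $a(x_0)>t_0$, and we are done since $t_0<f(x_0)$ was arbitrary.

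The main obstacle is the \emph{vertical} case $s=0$, which can only arise when $x_0\notin\Dom(f)$; in that case $f(x_0)=+\infty$, and we must produce affine minorants with arbitrarily large values at $x_0$. The plan has two steps.

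First, we show that at least one affine minorant $a_0$ of $f$ exists. Pick $\bar x\in\Dom(f)$ and separate $(\bar x, f(\bar x)-1)$ from $\operatorname{epi} f$. The resulting hyperplane cannot be vertical: a vertical hyperplane with normal $(y,0)$ would have to satisfy $y^\top \bar x \le y^\top\bar x-\epsilon$, since $(\bar x,f(\bar x))\in\operatorname{epi} f$, which is impossible. Hence the normal has $s<0$, and dividing by $-s$ gives an affine minorant $a_0$.

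Second, in the vertical case the separation gives the affine function $\ell(x)=y^\top(x-x_0)+\epsilon$, which satisfies $\ell\le 0$ on $\Dom(f)$ and $\ell(x_0)=\epsilon>0$. For every $\mu>0$, the tilted function $a_0+\mu\ell$ is then an affine minorant of $f$: on $\Dom(f)$ it is at most $a_0\le f$, and off $\Dom(f)$ we have $f=+\infty$. Its value at $x_0$ is $a_0(x_0)+\mu\epsilon\to\infty$ as $\mu\to\infty$, so the supremum of affine minorants at $x_0$ is $+\infty=f(x_0)$.

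Combining both cases gives $f^{\star\star}\ge f$, and together with $f^{\star\star}\le f$ we conclude $f^{\star\star}=f$.
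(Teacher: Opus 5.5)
Your proof is correct and complete. It is the classical Fenchel--Moreau argument: you reduce $f^{\star\star}=f$ to the fact that $f$ is the supremum of its affine minorants. You obtain those minorants by strictly separating $(x_0,t_0)$ from the closed convex set $\operatorname{epi} f$. In the vertical case, which can occur only when $x_0\notin\Dom(f)$, you instead tilt a fixed affine minorant $a_0$ by $\mu\ell$.

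The paper gives no proof of its own for this lemma; it simply imports the result by citation from Rockafellar--Wets. So your argument does not compete with a different route in the paper. It supplies a self-contained version of the standard proof behind that citation.
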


\begin{lemma}[\cite{rockafellar1998variational} Proposition 12.60]
\label{lm:dual_convex}
For a convex differentiable $f$, $\nabla f$ is Lipschitz continuous with constant $\frac{1}{\sigma}$ if and only if $f^\star$ is $\sigma$-strongly convex.
\end{lemma}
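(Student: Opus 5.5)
The plan is to prove both directions through the subdifferential correspondence between $f$ and $f^\star$, together with the fact that strong convexity of a function in $\Gamma_0(\bbR^n)$ is equivalent to strong monotonicity of its subdifferential. Since $f$ is convex, differentiable and finite everywhere, $f \in \Gamma_0(\bbR^n)$. Lemma~\ref{lm:biconjugate} then gives $f^{\star\star} = f$ and $f^\star \in \Gamma_0(\bbR^n)$, and the Fenchel--Young equality yields
\[
u \in \partial f^\star(y) \iff y = \nabla f(u).
\]
I will also use the standard equivalence: $h \in \Gamma_0$ is $\sigma$-strongly convex if and only if $h - \tfrac{\sigma}{2}\|\cdot\|^2$ is convex, which holds if and only if $\partial h$ is $\sigma$-strongly monotone, i.e.,
\[
\langle u_1 - u_2,\, y_1 - y_2\rangle \ge \sigma \|y_1 - y_2\|^2 \quad \text{whenever } u_i \in \partial h(y_i).
\]
This equivalence follows by applying monotonicity of subdifferentials of convex functions to $h - \tfrac{\sigma}{2}\|\cdot\|^2$.

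\textbf{($\Leftarrow$) $f^\star$ $\sigma$-strongly convex implies $\nabla f$ is $\tfrac{1}{\sigma}$-Lipschitz.} Fix $x_1, x_2$ and set $y_i = \nabla f(x_i)$, so that $x_i \in \partial f^\star(y_i)$. Strong monotonicity of $\partial f^\star$ gives
\[
\sigma \|y_1 - y_2\|^2 \le \langle x_1 - x_2,\, y_1 - y_2\rangle \le \|x_1 - x_2\|\,\|y_1 - y_2\|.
\]
Dividing by $\|y_1 - y_2\|$ (the case $y_1 = y_2$ is trivial) gives $\|\nabla f(x_1) - \nabla f(x_2)\| \le \tfrac{1}{\sigma}\|x_1 - x_2\|$.

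\textbf{($\Rightarrow$) $\nabla f$ $\tfrac{1}{\sigma}$-Lipschitz implies $f^\star$ $\sigma$-strongly convex.} Take $y_1, y_2 \in \Dom(\partial f^\star)$ and $u_i \in \partial f^\star(y_i)$, so that $y_i = \nabla f(u_i)$. It suffices to prove the co-coercivity inequality (Baillon--Haddad)
\[
\langle \nabla f(u_1) - \nabla f(u_2),\, u_1 - u_2\rangle \ge \sigma \|\nabla f(u_1) - \nabla f(u_2)\|^2,
\]
since this is exactly strong monotonicity of $\partial f^\star$. Strong convexity then holds on $\Dom(f^\star)$, where $f^\star$ may take the value $+\infty$ outside it; this is harmless for the convexity characterization.

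This co-coercivity step is the main obstacle. I would prove it as follows.
\begin{enumerate}
\item Define $\varphi(z) = f(z) - \langle \nabla f(u_1), z\rangle$. It is convex, its gradient is $\tfrac{1}{\sigma}$-Lipschitz, and it is minimized at $u_1$.
\item The descent lemma, applied with a gradient step from $u_2$, gives
\[
\varphi(u_1) \le \varphi\big(u_2 - \sigma \nabla\varphi(u_2)\big) \le \varphi(u_2) - \tfrac{\sigma}{2}\|\nabla \varphi(u_2)\|^2 .
\]
\item Expanding this yields
\[
f(u_1) \le f(u_2) + \langle \nabla f(u_1),\, u_1 - u_2\rangle - \tfrac{\sigma}{2}\|\nabla f(u_1) - \nabla f(u_2)\|^2 .
\]
\item Write the same inequality with $u_1$ and $u_2$ swapped and add the two. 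The function values cancel and the co-coercivity inequality follows.
\end{enumerate}
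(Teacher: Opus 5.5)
The paper does not prove this lemma; it only quotes Rockafellar--Wets, Prop.~12.60. So your argument has to stand on its own. Your ($\Leftarrow$) direction is correct, and so is your co-coercivity derivation in steps 1--4.

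\textbf{The gap.} It is in the step you treat as routine. In ($\Rightarrow$) you pass from strong monotonicity of $\partial f^\star$ to convexity of $f^\star-\tfrac{\sigma}{2}\|\cdot\|^2$. Your justification, ``apply monotonicity of subdifferentials of convex functions to $h-\tfrac{\sigma}{2}\|\cdot\|^2$'', proves only the opposite implication, because it presupposes that $h-\tfrac{\sigma}{2}\|\cdot\|^2$ is already convex. The direction you need, monotone subdifferential implies convexity for a nonsmooth function, is a genuine theorem, not a one-liner. There is also a domain problem. You only get the inequality on $\Dom(\partial f^\star)=\nabla f(\bbR^n)$, which can be strictly smaller than $\Dom(f^\star)$. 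For $f(x)=\sqrt{1+x^2}$ and $\sigma=1$, one has $f^\star(y)=-\sqrt{1-y^2}$ on $[-1,1]$ but $\partial f^\star(\pm1)=\emptyset$, so calling the boundary ``harmless'' needs an argument. The real obstacle in ($\Rightarrow$) is this step, not co-coercivity.

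\textbf{A repair within your framework.} Do not symmetrize in step 4. Rewrite step 3 using $y_i=\nabla f(u_i)$ and $f^\star(y_i)=\langle y_i,u_i\rangle-f(u_i)$. This gives
\[
f^\star(y')\ \ge\ f^\star(y)+\langle u,\,y'-y\rangle+\tfrac{\sigma}{2}\|y'-y\|^2
\qquad\text{for } u\in\partial f^\star(y),\ y,y'\in\Dom(\partial f^\star).
\]
Take $y_1,y_2\in\operatorname{ri}\Dom(f^\star)\subseteq\Dom(\partial f^\star)$. Apply the inequality at $y=\lambda y_1+(1-\lambda)y_2$, which also lies in the relative interior and so has a subgradient, once with $y'=y_1$ and once with $y'=y_2$. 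Averaging with weights $\lambda$ and $1-\lambda$ gives exactly the strong convexity inequality.

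To extend it to all of $\Dom(f^\star)$, fix $y_0\in\operatorname{ri}\Dom(f^\star)$, replace each $y_i$ by $(1-t)y_0+ty_i$, and let $t\uparrow1$. On the right, use that a closed proper convex function satisfies $f^\star(y)=\lim_{t\uparrow1}f^\star((1-t)y_0+ty)$. On the left, use lower semicontinuity.

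\textbf{An alternative route.} You can bypass monotonicity entirely. The function $g=\tfrac{1}{2\sigma}\|\cdot\|^2-f$ is convex, since its gradient is monotone by Cauchy--Schwarz and the Lipschitz bound. Hence $g=g^{\star\star}$, and a direct computation gives
\[
f^\star(y)-\tfrac{\sigma}{2}\|y\|^2=\sup_z\big\{\sigma\langle y,z\rangle+\tfrac{\sigma}{2}\|z\|^2-g^\star(z)\big\}.
\]
The right-hand side is a supremum of affine functions of $y$, so it is convex.
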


\begin{lemma}[Infimal convolution \cite{borwein2006convex}(Exercise 3.12)] \label{lem:inf}
For $f, g \in \Gamma_0(\bbR^n)$, we define the \textit{infimal convolution} 
$(f\odot g)(y) = \inf_x \{ f(x) + g(y-x)\}$. Then, the following statements hold
\begin{itemize}
    \item[(i)] $f\odot g \in \Gamma_0(\bbR^n)$,
    \item[(ii)] $(f\odot g)^\star =  f^\star + g^\star$.
\end{itemize}
\end{lemma}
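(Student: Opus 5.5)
The plan is to prove (ii) first by a direct computation with the conjugate, and then obtain (i) from (ii) together with Lemma~\ref{lm:biconjugate}, using the exactness of the infimum for lower semicontinuity.

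\textbf{Part (ii).} Set $h=f\odot g$. For any $y$,
\begin{align*}
h^\star(y)&=\sup_{z}\Big\{y^\top z-\inf_x\{f(x)+g(z-x)\}\Big\}\\
&=\sup_{z}\sup_{x}\{y^\top x-f(x)+y^\top(z-x)-g(z-x)\}.
\end{align*}
Substitute $u=z-x$, which is a bijection of $\bbR^n\times\bbR^n$ for fixed $x$. The double supremum then separates into
\[
\sup_x\{y^\top x-f(x)\}+\sup_u\{y^\top u-g(u)\}=f^\star(y)+g^\star(y).
\]
This is valid with the usual extended-real conventions; properness of $f$ and $g$ rules out the undefined case $+\infty-\infty$. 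This step is routine.

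\textbf{Part (i), convexity and properness.} Convexity of $h$ comes from the strict epigraph. I will show that $\{(z,t): h(z)<t\}$ equals the Minkowski sum of the strict epigraphs of $f$ and $g$, and a sum of convex sets is convex. For properness, $\Dom(h)=\Dom(f)+\Dom(g)\neq\emptyset$. To exclude the value $-\infty$, take an affine minorant $a^\top x+c$ of $f$ and one $a'^\top x+c'$ of $g$, both of which exist for $\Gamma_0$ functions. Summing them gives $f(x)+g(z-x)\ge a^\top x+a'^\top(z-x)+c+c'$. When $a=a'$, i.e.\ when $\Dom(f^\star)\cap\Dom(g^\star)\neq\emptyset$, the $x$-dependence cancels and $h(z)>-\infty$. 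By part (ii) this condition is exactly properness of $f^\star+g^\star$, so I will state it as the standing hypothesis.

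\textbf{Part (i), lower semicontinuity.} This is the main obstacle, because in general $f\odot g$ need not be l.s.c. (Exercise~3.12 in \cite{borwein2006convex} carries a qualification). My first attempt is to add the assumption that $g$ is coercive, since the paper only applies this lemma with $g=\frac{1}{2\lambda}\Vert\cdot\Vert_2^2$. Under that assumption the argument runs as follows.

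\begin{itemize}
\item Take $z_k\to z$ with $h(z_k)\to\liminf h(z_k)<\infty$.
\item Pick near-minimizers $x_k$, so that $f(x_k)+g(z_k-x_k)\le h(z_k)+1/k$.
\item The affine minorant of $f$ together with coercivity of $g$ forces $\{x_k\}$ to be bounded.
\item Extract a subsequence $x_k\to\bar x$. Lower semicontinuity of $f$ and $g$ then gives $h(z)\le f(\bar x)+g(z-\bar x)\le\liminf h(z_k)$.
\end{itemize}

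The same argument shows that the infimum is attained, i.e.\ the convolution is exact. If instead one wants the general statement, the alternative route is through part (ii): $h^{\star\star}=(f^\star+g^\star)^\star$, and Lemma~\ref{lm:biconjugate} identifies $h$ with its closure exactly when $h$ is l.s.c. So under a qualification such as the coercivity condition above, (i) holds. I will flag explicitly that some such condition is needed, and note that it is satisfied in the intended application to the Moreau envelope.
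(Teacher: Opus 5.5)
Your proof of (ii) is correct, as are convexity of $f\odot g$ via strict epigraphs and properness under the hypothesis $\Dom(f^\star)\cap\Dom(g^\star)\neq\emptyset$. You are also right that (i) is false as stated; the paper gives only the citation, no argument.

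The gap is in your repair of lower semicontinuity. The step ``the affine minorant of $f$ together with coercivity of $g$ forces $\{x_k\}$ to be bounded'' does not hold. A coercive convex $g$ may grow only linearly, and if that growth does not beat the slope of the affine minorant $a^\top x+c$ of $f$, the bound $g(z_k-x_k)\le h(z_k)+1/k-a^\top x_k-c$ gives no control on $\Vert x_k\Vert_2$.

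In fact the conclusion itself fails under your hypotheses. In $\mathbb{R}^2$ take $C_1=\{x: x_1>0,\ x_1x_2\ge 1\}$ and $C_2=\{(-t,0): t\ge 0\}$, and set $f(x)=I_{C_1}(x)-x_1$ and $g(u)=I_{C_2}(u)-u_1$. Then:
\begin{itemize}
\item[(a)] both functions lie in $\Gamma_0(\mathbb{R}^2)$;
\item[(b)] $g(u)\ge\Vert u\Vert_2$, so $g$ is coercive;
\item[(c)] $(-1,0)\in\Dom(f^\star)\cap\Dom(g^\star)$.
\end{itemize}
But $f(x)+g(z-x)=-z_1$ whenever $x\in C_1$ and $z-x\in C_2$, and $C_1+C_2=\{z: z_2>0\}$. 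Hence $(f\odot g)(z)=-z_1$ if $z_2>0$ and $+\infty$ otherwise, which is not l.s.c.\ at the origin. Consistently, for $z_k=(0,1/k)$ every admissible $x$ has $x_1\ge k$, so the near-minimizers are unbounded.

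What you need is a hypothesis that dominates linear growth. Any of the following works: $g$ supercoercive, i.e.\ $g(u)/\Vert u\Vert_2\to\infty$; $g$ coercive and $f$ bounded below; or the condition that the recession functions of $f$ at $d$ and of $g$ at $-d$ sum to a positive number for every $d\neq 0$. Under supercoercivity your compactness argument goes through verbatim, and it also gives exactness of the convolution. Indeed, with $z_k$ bounded and $h(z_k)$ bounded above, the inequality $g(z_k-x_k)+a^\top x_k+c\le h(z_k)+1/k$ forces $x_k$ to be bounded.

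For the only case the paper uses, $g=\frac{1}{2\lambda}\Vert\cdot\Vert_2^2$, there is an even shorter route:
\begin{itemize}
\item[(a)] $g^\star=\frac{\lambda}{2}\Vert\cdot\Vert_2^2$ has full domain, so your affine-minorant argument gives $M_\lambda f>-\infty$ everywhere;
\item[(b)] $M_\lambda f(y)\le f(x_0)+\frac{1}{2\lambda}\Vert y-x_0\Vert_2^2<\infty$ for any $x_0\in\Dom(f)$;
\item[(c)] a finite convex function on $\mathbb{R}^n$ is continuous.
\end{itemize}
So $M_\lambda f\in\Gamma_0(\mathbb{R}^n)$ with no compactness argument at all.
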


Now, we are ready to prove Lemma \ref{thm:convergence1}.
We prove \textit{existence} and \textit{uniqueness} separately.

\textit{Existence:} 
According to Lemma~\ref{lm:dual_convex} (convex differentiable functions from $\bbR^n \mapsto \bbR$ belong to $\Gamma_0(\bbR^n)$), $g^\star$ is a $\lambda$-strongly convex function.
Let us define a function $f = (g^\star - \frac{\lambda}{2}\Vert .\Vert_2^2)^\star$.
By Definition \ref{def:strong_convexity}, $g^\star - \frac{\lambda}{2}\Vert .\Vert_2^2$
is a convex function.
By Lemma~\ref{lm:biconjugate}, 
$f^\star = (g^\star - \frac{\lambda}{2} \Vert . \Vert_2^2)^{\star \star} = g^\star - \frac{\lambda}{2} \Vert . \Vert_2^2$.
Therefore, $f \in \Gamma_0(\bbR^n)$.
Based on Lemma \ref{lem:inf}, the ME is a special case of infimal convolution with the function $q_\lambda(x)=\frac{1}{2\lambda}\Vert x\Vert_2^2$.
Since $q_\lambda^\star(y)=\frac{\lambda}{2}\Vert y\Vert_2^2$, we have
\[
(M_{\lambda}f)^\star = f^\star + q_\lambda^\star
= f^\star + \frac{\lambda}{2} \Vert . \Vert_2^2 = g^\star.
\]
Hence, $M_{\lambda}f = g$.

\textit{Uniqueness:} 
Assume there exists another $\hat f \in \Gamma_0(\bbR^n)$ such that $M_\lambda \hat f=g$.
Using the same conjugate identity,
\[
g^\star=(M_\lambda \hat f)^\star=\hat f^\star+\frac{\lambda}{2}\Vert . \Vert_2^2.
\]
Thus $\hat f^\star=g^\star\!-\!\frac{\lambda}{2}\Vert . \Vert_2^2=f^\star$.
By Lemma~\ref{lm:biconjugate}, $\hat f \!=\!\hat f^{\star\star}\!=\!f^{\star\star}\!=\! f$.

\end{document}